\documentclass{article}

\usepackage{microtype}
\usepackage{graphicx}
\usepackage{subcaption}
\usepackage{booktabs} % for professional tables
\usepackage{multirow}
\usepackage{svg}
\usepackage{fontawesome5}

\usepackage{hyperref}

\usepackage[accepted]{icml2026}

\usepackage{amsmath}
\usepackage{amssymb}
\usepackage{mathtools}
\usepackage{amsthm}
\usepackage{url}

\usepackage{xcolor}

\usepackage{enumitem}

\renewcommand{\algorithmicrequire}{\textbf{Input:}}
\renewcommand{\algorithmicensure}{\textbf{Output:}}

\usepackage[capitalize,noabbrev]{cleveref}

\theoremstyle{plain}
\newtheorem{theorem}{Theorem}[section]
\newtheorem{proposition}[theorem]{Proposition}

\newtheorem{corollary}[theorem]{Corollary}
\theoremstyle{definition}

\newtheorem{assumption}[theorem]{Assumption}
\theoremstyle{remark}

\usepackage[textsize=tiny]{todonotes}

\icmltitlerunning{Decision-Focused Learning via Tangent-Space Projection of Prediction Error}

\begin{document}

\twocolumn[
  \icmltitle{Decision-Focused Learning via Tangent-Space Projection of Prediction Error}

  \icmlsetsymbol{equal}{*}
  \icmlsetsymbol{corresp}{\textdagger}

  \begin{icmlauthorlist}
    \icmlauthor{Junhyeong Lee}{unist}
    \icmlauthor{Sangjin Jin}{unist}
    \icmlauthor{Yongjae Lee}{unist,corresp}
  \end{icmlauthorlist}

  \icmlaffiliation{unist}{Department of Industrial Engineering, Ulsan National Institute of Science and Technology, Ulsan, South Korea}

  \icmlcorrespondingauthor{Yongjae Lee}{yongjaelee@unist.ac.kr}

  \icmlkeywords{Machine Learning, ICML}

  \vskip 0.3in
]

\printAffiliationsAndNotice{\textdagger Corresponding author.}

\begin{abstract}
Decision-Focused Learning (DFL) trains predictors to improve downstream decision quality, but computing regret gradients typically requires differentiating through solvers or relying on surrogate losses, which can be computationally expensive or deviate from the true objective.
We show that, under standard regularity with locally stable active constraints, the regret gradient admits a closed-form geometric characterization, equivalent to the prediction error projected onto the tangent space of active constraints, scaled by local curvature.
This reveals that regret gradients can be obtained by filtering decision-irrelevant components from the MSE gradient, providing a simpler and more direct alternative to existing approaches. We propose \textbf{PEAR} (\textbf{P}rojected \textbf{E}rror \textbf{A}s \textbf{R}egret-gradient), which computes regret gradients via a reduced linear system over active constraints, avoiding differentiation through solver iterations or additional optimization solves.
Experiments on LP benchmarks and a real-world QP task show that PEAR achieves the best decision quality among all baselines while being the most computationally efficient, with gains that persist under constraint shifts. Code is available at \href{https://github.com/FinJun/PEAR.git}{\faGithub}.
%Code is available at \href{https://anonymous.4open.science/r/Tangent_Space_Projection-3246}{\faGithub \ project page}.
\end{abstract}

\section{Introduction}
\label{sec:intro}

Machine learning is increasingly deployed in decision-making, where a predictive model estimates parameters of an optimization problem and a downstream solver computes the final decision.
In these predict-then-optimize settings, reducing the prediction error alone does not necessarily yield good decisions.
This has motivated Decision-Focused Learning (DFL)~\cite{donti2017task, elmachtoub2022smart}, which trains predictors to improve decision quality, typically measured by regret, the suboptimality incurred by acting on predicted rather than true parameters.

Importantly, prediction accuracy and decision quality are not inherently at odds.
In the linear-cost setting, minimizing mean squared error can be consistent with reducing expected regret~\cite{elmachtoub2022smart}.
This raises a natural question about the structural relationship between prediction error and regret gradients, and how this relationship can inform training.
However, this connection remains underexplored.
Most prior work has instead focused on procedural questions, namely how to obtain regret gradients.

While such methods are valuable for enabling end-to-end learning, existing approaches have notable limitations.
Among sensitivity-based differentiable optimization layers, general-purpose methods can trade off efficiency and robustness for broad applicability, and certain constructions require symmetrizing the KKT system, which can lead to ill-conditioned linear solves~\cite{magoon2025differentiation, bambade2024leveraging}.
A complementary line of work uses surrogate objectives or perturbed estimators to approximate regret gradients.
These methods can provide informative learning signals, but they may deviate from the true regret objective, and many are tailored to linear-cost problems, which limits their generality beyond linear objectives~\cite{mandi2024decision, schutte2024robust}.

We propose \textbf{PEAR} (\textbf{P}rojected \textbf{E}rror \textbf{A}s \textbf{R}egret-gradient), a simple projection-based method for computing regret gradients.
In contrast to prior work that mainly focuses on how to differentiate through an optimizer or design proxy objectives, we study the \emph{structure} of regret gradients and their connection to prediction error.
Under standard regularity and locally fixed active constraints, we show that the regret gradient admits an explicit geometric form.
It is given by projecting the prediction error onto the tangent space of the active constraints under the local curvature metric.
This result shows that the regret gradient can be obtained from the MSE gradient by filtering out decision-irrelevant components of the prediction error.

Guided by this characterization, PEAR computes regret gradients without differentiating through solver iterations and without relying on surrogate losses.
Instead, it evaluates the required projection by solving a compact reduced linear system induced by the active constraints, e.g., via a Schur-complement computation.
This avoids backward overhead and improves numerical stability compared to differentiable layers, while remaining faithful to the true regret objective rather than a proxy.
PEAR applies to optimization problems with linear or quadratic objectives.

% Our contributions are as follows:
% \vspace{-0.5em}
% \begin{enumerate}
%     \item \textbf{PEAR: a simple projection rule for regret graidents.}
%     DFL에서 가장 중요한 regret과 MSE사이의 관계를 규명하고, 기하학적 관계를 이용하여 differentiable optimization layer들을 학습 프레임 워크에 통합하거나 regret gradient를 근사하기 위한 loss를 디자인 할 필요 없이 regret gradient를 얻을 수 있다.

%     \item \textbf{Efficient and stable computation.}
%     PEAR evaluates the projection by solving a compact reduced linear system induced by the active constraints (e.g., via a Schur-complement computation), reducing backward overhead and improving numerical stability compared to general-purpose differentiable optimization layers.
    
%     \item \textbf{Strong empirical performance.}
%     Synthetic LP benchmakrs와 real-world QP task에서 PEAR는 기존 베이스라인들 대비 regret과 training efficiency를 모두 개선시켰다. 심지어 noise와 constraint shift가 발생하는 상황에서 모두 가장 좋은 성능을 보여주었다.

% \end{enumerate}

\noindent\textbf{Our contributions are as follows:}
\vspace{-0.5em}
\begin{enumerate}
    \item \textbf{PEAR: a simple projection for regret gradients.}
    We establish an explicit geometric relationship between prediction error and regret gradients in Decision-Focused Learning.
    This interpretation reveals which error components influence decisions, and \textbf{PEAR} filters out decision-irrelevant directions to improve DFL performance.

    \item \textbf{Efficient and stable computation.}
    PEAR evaluates the projection by solving a compact reduced linear system induced by the active constraints, reducing backward overhead and improving numerical stability compared to differentiable optimization layers.
    Compared to LP baselines, PEAR provides a deterministic regret gradient without requiring additional solver calls or perturbation-based estimation.

    \item \textbf{Strong empirical performance.}
    On synthetic LP benchmarks and a real-world QP task, PEAR improves regret and training efficiency over other baselines in most settings, and remains strong under noise and constraint perturbations.
\end{enumerate}

\section{Related Work}
\label{sec:related_work}

\paragraph{Decision-focused learning.}
Decision-Focused Learning (DFL) trains predictors to improve downstream decision quality, typically measured by regret, in predict-then-optimize pipelines~\cite{donti2017task,wilder2019melding,elmachtoub2022smart}.
DFL has been studied in a range of applications including healthcare, portfolio optimization, and other optimization-driven decision systems~\cite{verma2023restless, costa2023distributionally, lee2025return, hwang2025decision, ellinas2024decision}.
Recent surveys summarize major methodological themes such as regularization-based approaches, convex surrogate losses, and perturbation-based gradient estimators~\cite{mandi2024decision,sadana2025survey}.
A recurring practical challenge is to obtain learning signals that are well aligned with regret while keeping training efficient, especially when training repeatedly interacts with a downstream optimization~\cite{mandi2020smart, mulamba2020contrastive, berden2025solver}.

% main figure
\begin{figure*}[t!]
  \centering
  \includegraphics[width=0.95\textwidth]{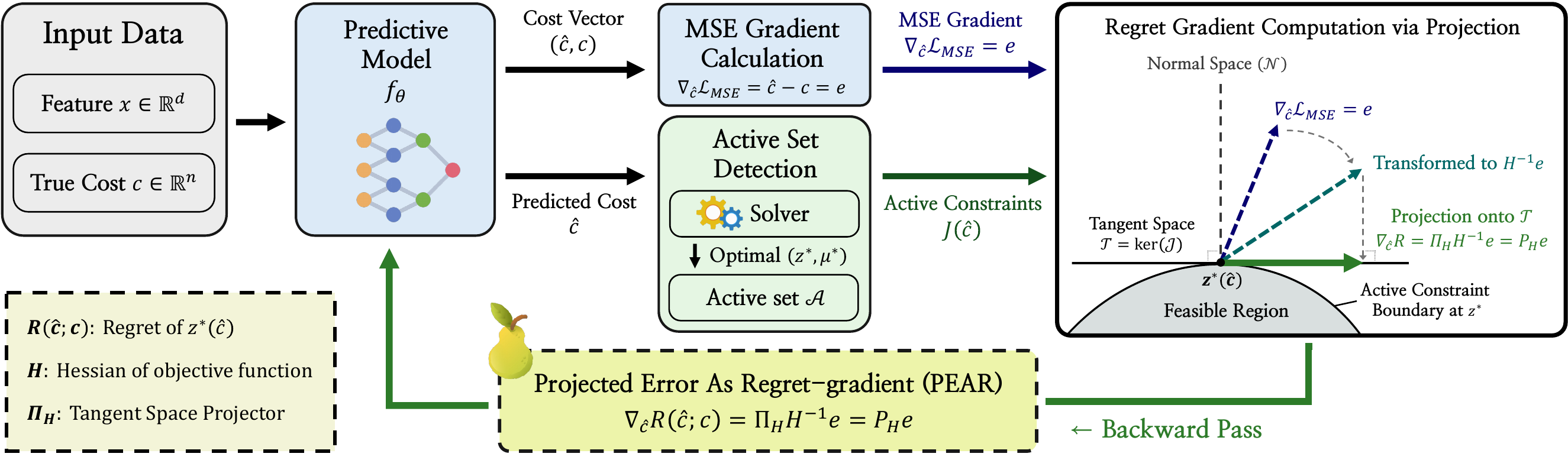}
  \caption{\textbf{The PEAR framework.} PEAR provides a direct pipeline for transforming prediction error into a regret gradient via a projection onto the local decision geometry. After identifying the active constraints, PEAR rescales the error by local curvature and projects it onto the feasible tangent space, filtering out decision-irrelevant components.}
  \label{fig:pear_framework}
\end{figure*}

\paragraph{Differentiable optimization layers.}
A common approach to end-to-end learning is to treat the downstream optimization as a differentiable layer.
Early works relied on unrolling iterative optimization steps to compute gradients via the chain rule~\cite{domke2012generic,amos2017input}, while OptNet~\cite{amos2017optnet} introduced implicit differentiation through KKT conditions for quadratic programs.
Subsequent work generalized this idea to broader classes of convex programs~\cite{agrawal2019differentiable,agrawal2019differentiating,gould2021deep,blondel2022efficient}.
While these methods provide principled gradients, in practice the backward pass depends heavily on the solver implementation.
For instance, OptNet requires symmetrized linear systems that can become ill-conditioned~\cite{bambade2024leveraging}, while CVXPYLayers~\cite{agrawal2019differentiable} routes QPs through cone reformulations, restricting access to specialized solvers and degrading scalability on large sparse instances~\cite{magoon2025differentiation}.
These issues make training expensive and limit solver choices~\cite{blondel2022efficient}, motivating recent work on decoupling the forward and backward passes for solver-agnostic sensitivity analysis~\cite{magoon2025differentiation}.

\paragraph{Alternative approaches for regret minimization.}
Many real-world optimization problems are inherently discrete, leading to piecewise-constant solution maps that yield zero or undefined gradients~\cite{mandi2022decision}.
To address uninformative gradients, DFL has largely focused on surrogate objectives and gradient approximations.
For linear optimization, SPO+~\cite{elmachtoub2022smart} is a convex surrogate that upper bounds regret and can be optimized efficiently.
A related strategy smooths the optimization problem itself, for example by adding quadratic regularization or using continuous relaxations to obtain meaningful gradients~\cite{wilder2019melding,mandi2020interior}.
Alternatively, perturbation-based methods keep the original solver but add random perturbations and use the resulting changes in the solution to build informative gradient estimates~\cite{berthet2020learning,poganvcic2019differentiation,niepert2021implicit}.
These approaches can be practical and scalable, but they do not generally yield the true regret gradient and are often restricted to linear objectives, making generalization to broader problem classes difficult~\cite{mandi2024decision, schutte2024robust}.

\paragraph{Sensitivity analysis and active set reduction.}
Our analysis builds on classical sensitivity theory for constrained optimization, which characterizes how solutions change under small perturbations~\cite{fiacco1976sensitivity,fiacco1983introduction,robinson1980strongly,bonnans2013perturbation}.
Under standard regularity conditions, the active constraints remain locally stable and an inequality-constrained problem reduces to an equality-constrained system over the active set~\cite{kyparisis1990sensitivity,dontchev2009implicit}.
This active set reduction yields efficient computations, where solution derivatives can be obtained from reduced linear systems defined by the active constraints~\cite{bemporad2002explicit}.
These principles enable differentiable optimization to exploit active constraint reduction across arbitrary solvers, computing gradients from reduced linear systems~\cite{magoon2025differentiation}.
\section{Regret Gradient via Tangent-Space Projection}
\label{sec:tangent_space}

% This section establishes our main theoretical result. Under standard regularity conditions, the regret gradient admits a closed-form geometric characterization as the prediction error projected onto the tangent space of active constraints, scaled by local curvature.
% This structure reveals that regret gradients can be obtained by filtering decision-irrelevant components from the MSE gradient.

This section establishes our main theoretical result. Under standard regularity conditions, the regret gradient admits a closed-form geometric characterization as the prediction error projected onto the tangent space of active constraints, scaled by local curvature.
This structure reveals not only how to compute regret gradients efficiently by filtering decision-irrelevant components, but also why and what to correct for improving decisions.

% 3.1
\subsection{Problem Setup}
\label{sec:problem_setup}

We consider a predict-then-optimize pipeline where contextual features $x\in\mathbb{R}^d$ inform predictions about unknown cost parameters.
A predictive model $f_\theta:\mathbb{R}^d\to\mathbb{R}^n$ outputs $\hat{c}=f_\theta(x)$ as an estimate of the true cost vector $c\in\mathbb{R}^n$.
Given $\hat{c}$, the downstream decision is obtained by solving the convex optimization problem
\begin{equation}
\begin{aligned}
z^*(\hat{c}) \in \arg\min_{z \in \mathbb{R}^n} \quad & \phi(z) + \hat{c}^\top z \\
\text{subject to} \quad & Az = b, \\
& Gz \le h,
\end{aligned}
\label{eq:opt}
\end{equation}
where $\phi:\mathbb{R}^n\to\mathbb{R}$ is a known convex base objective, $A\in\mathbb{R}^{p\times n}$ defines equality constraints, and $G\in\mathbb{R}^{m\times n}$ defines inequality constraints.

Decision-Focused Learning minimizes the regret of the decision computed from $\hat{c}$ under the true cost $c$:
\begin{equation}
\begin{split}
\mathcal{R}(\hat{c}; c)
&=
\big[\phi(z^*(\hat{c})) + c^\top z^*(\hat{c})\big] \\
&\quad -
\big[\phi(z^*(c)) + c^\top z^*(c)\big].
\end{split}
\label{eq:regret}
\end{equation}

Gradient-based training requires $\nabla_\theta \mathcal{R}$.
By the chain rule,
\begin{equation}
\frac{d\mathcal{R}}{d\theta}
=
\frac{\partial \mathcal{R}}{\partial z^*}
\cdot
\frac{\partial z^*}{\partial \hat{c}}
\cdot
\frac{\partial \hat{c}}{\partial \theta},
\label{eq:chain_rule}
\end{equation}
where the main challenge is the solution sensitivity $\partial z^*/\partial \hat{c}$, since it requires differentiating through an $\arg\min$ operator~\cite{wilder2019melding}.

Since the ground-truth cost term $c^\top z^*(c)$ in \eqref{eq:regret} is independent of $\hat{c}$, 
the regret gradient with respect to $\hat{c}$ becomes
\begin{equation}
\nabla_{\hat{c}}\mathcal{R} = \left(\frac{\partial z^*}{\partial \hat{c}}\right)^\top 
\left[\nabla_{z^*}\phi(z^*) + c\right].
\label{eq:regret_gradient_c}
\end{equation}

% 3.2
\subsection{Active Set and Local Regularity}
\label{sec:regularity}

The difficulty in differentiating $z^*(\hat{c})$ stems from inequality constraints. The mapping $\hat{c}\mapsto z^*(\hat{c})$ can be nonsmooth when the set of binding constraints changes.
To recover local differentiability, we restrict attention to neighborhoods where 
the active set remains invariant.
Within such a region, the inequality-constrained problem reduces to an 
equality-constrained one, in which inactive constraints are automatically satisfied 
and only the active constraints matter.
Let $\mathcal{A}(\hat{c}) = \{i : (Gz^*(\hat{c})-h)_i=0\}$ denote the active set at $z^*(\hat{c})$ and define the stacked active constraint matrix
\begin{equation}
\label{eq:active_matrix}
J(\hat{c}) =
\begin{bmatrix}
A \\
G_{\mathcal{A}(\hat{c})}
\end{bmatrix}
\in\mathbb{R}^{k\times n},
\qquad
k=p+|\mathcal{A}(\hat{c})|.
\end{equation}
On any neighborhood where $\mathcal{A}(\hat{c})$ remains unchanged, the inequality-constrained problem behaves like an equality-constrained problem with constraint Jacobian $J$. To ensure the solution map remains smooth within this region, we require the following regularity conditions from classical sensitivity analysis~\citep{fiacco1983introduction}.

\begin{assumption}[Strict Convexity]
\label{assum:convex}
The Hessian $H = \nabla^2 \phi(z^*)$ is positive definite.
\end{assumption}

\begin{assumption}[Linear Independence Constraint Qualification]
\label{assum:licq}
The active constraint Jacobian has full row rank, $\mathrm{rank}(J)=k$.
\end{assumption}

\begin{assumption}[Strict Complementary Slackness]
\label{assum:scs}
For each active inequality $i\in\mathcal{A}$, the corresponding dual variable 
satisfies $\mu_i^*>0$; for inactive constraints, $\mu_i^*=0$.
\end{assumption}

Under these assumptions, the solution map is locally unique and continuous, 
and the active set remains invariant under sufficiently small perturbations of 
$\hat{c}$~\citep{robinson1980strongly,fiacco1983introduction, bonnans2013perturbation}.
A formal statement is provided in Appendix~\ref{app:regularity}.

% fig 1

% \begin{figure}[t]
%   \centering
%   \includegraphics[width=\columnwidth]{figures/fig_a.pdf}
%   \caption{\textbf{Regret gradient as projected MSE.}
%   Locally (fixed active set), the MSE gradient follows the raw error, whereas TSP filters the normal component and updates only along the tangent space using $P_H e$, where $e=\hat c-c$.}
%   \label{fig:tsp_geometry}
% \end{figure}

% \begin{figure*}[t]
%   \centering
%   \begin{subfigure}[t]{0.4\textwidth}
%     \centering
%     \includegraphics[width=\linewidth]{figures/fig_a.pdf}
%     \caption{Regret gradient as projected MSE.}
%     \label{fig:tsp_geometry}
%   \end{subfigure}
%   \hfill
%   \begin{subfigure}[t]{0.4\textwidth}
%     \centering
%     \includegraphics[width=\linewidth]{figures/fig_b.pdf}
%     \caption{fig}
%     \label{fig:fig_b}
%   \end{subfigure}
%   \caption{\textbf{Overall caption.} (a) Locally (fixed active set), the MSE gradient follows the raw error, whereas PEAR filters the normal component and updates only along the tangent space using $P_H e$, where $e=\hat c-c$. (b) fig}
%   \label{fig:main_figure}
% \end{figure*}

% 3.3 
\subsection{Sensitivity via a Reduced KKT System}
\label{sec:sensitivity}

Fix $\hat{c}$ and consider perturbations $d\hat{c}$ that are small enough so that 
the active set does not change. 
At the optimal solution, the KKT conditions are given by
\begin{equation}
\label{eq:kkt}
\begin{split}
\nabla \phi(z^*) + \hat{c} + A^\top \lambda^* + G^\top \mu^* &= 0, \\
Az^* = b, \quad Gz^* \leq h, \quad \mu^* &\geq 0, \\
\text{diag}(\mu^*)(Gz^* - h) &= 0.
\end{split}
\end{equation}
% Under Assumptions~\ref{assum:convex}--\ref{assum:scs}, inactive inequalities remain strictly feasible for sufficiently small perturbations that keep the active set fixed. Thus the problem reduces to the equality-constrained system whose KKT conditions become 
Under Assumptions~\ref{assum:convex}--\ref{assum:scs}, inactive inequalities remain strictly feasible under sufficiently small perturbations that preserve the active set. Thus the problem locally reduces to an equality-constrained system whose KKT conditions become
\begin{equation}
\label{eq:kkt_reduced}
\nabla \phi(z^*) + \hat{c} + J^\top y^* = 0,
\qquad
Jz^*=\tilde{b},
\end{equation}
where $J$ is the stacked active constraint Jacobian, $y^*\in\mathbb{R}^k$ collects dual variables for active constraints and 
$\tilde{b}=[b;h_{\mathcal{A}}]$.
We can view the reduced KKT system \eqref{eq:kkt_reduced} as implicitly defining 
$z^*$ and $y^*$ as functions of $\hat{c}$. 
Define the implicit function
\begin{equation}
\label{eq:implicit_kkt}
F(z, y, \hat{c}) = 
\begin{bmatrix}
\nabla \phi(z) + \hat{c} + J^\top y \\
Jz - \tilde{b}
\end{bmatrix} = 0.
\end{equation}
By the implicit function theorem~\cite{dontchev2009implicit}, differentiating $F$ with respect to $\hat{c}$ yields an explicit linear system for the perturbations $dz$ and $dy$
\begin{equation}
\label{eq:kkt_system}
\begin{bmatrix}
H & J^\top \\[2pt]
J & 0
\end{bmatrix}
\begin{bmatrix}
dz \\[2pt] dy
\end{bmatrix}
=
\begin{bmatrix}
-d\hat{c} \\[2pt] 0
\end{bmatrix}.
\end{equation}
where $H = \nabla^2 \phi(z^*)$  is the Hessian of the objective.

Since the active set is fixed, only the active constraints appear in the system, making this explicit formula much simpler than working with the full KKT conditions where many dual variables would be zero.

Under Assumptions~\ref{assum:convex}--\ref{assum:licq}, the KKT matrix in \eqref{eq:kkt_system} is invertible~\citep{boyd2004convex}.
Eliminating the dual variable $dy$ via the Schur complement gives
\begin{equation}
\label{eq:solution_sensitivity}
dz = -P_H\, d\hat{c},
\qquad\text{equivalently}\qquad
\frac{\partial z^*}{\partial \hat{c}} = -P_H,
\end{equation}
where the sensitivity operator is
\begin{equation}
\label{eq:P_H}
P_H = H^{-1} - H^{-1}J^\top(JH^{-1}J^\top)^{-1}JH^{-1}.
\end{equation}

The derivation of \eqref{eq:solution_sensitivity}--\eqref{eq:P_H} is standard in sensitivity analysis. For completeness, we include it in Appendix~\ref{app:sensitivity}.

The operator $P_H$ enforces feasibility at first order.
Indeed, substituting $dz=-P_Hd\hat{c}$ into the linearized constraints yields $Jdz=0$. 
Therefore, the first-order solution perturbations lie in the feasible tangent space $\mathcal{T}:=\ker(J)$.
This is made explicit by the projector identity below.

\begin{proposition}[Projection Structure]
\label[proposition]{prop:proj_props}
Under Assumptions~\ref{assum:convex}--\ref{assum:licq}, define
\begin{equation}
\label{eq:Pi_H}
\Pi_H = I - H^{-1}J^\top(JH^{-1}J^\top)^{-1}J.
\end{equation}
Then $\Pi_H$ is an $H$-orthogonal projector onto $\ker(J)$ and
\begin{equation}
\label{eq:PH_proj_identity}
P_H = \Pi_H H^{-1},
\qquad
JP_H=0,
\qquad
P_HJ^\top=0.
\end{equation}
\end{proposition}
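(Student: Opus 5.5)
The proof is a direct matrix computation. The only preliminary is that the Gram-type matrix $M := JH^{-1}J^\top \in \mathbb{R}^{k\times k}$ is invertible, so that $\Pi_H$ and $P_H$ are well defined.

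\textbf{Invertibility of $M$.} By Assumption~\ref{assum:convex}, $H^{-1}$ is symmetric positive definite. By Assumption~\ref{assum:licq}, $J^\top$ has trivial kernel. Hence for $v\neq 0$ we have $v^\top M v = (J^\top v)^\top H^{-1}(J^\top v) > 0$. So $M$ is symmetric positive definite and in particular invertible; this is the one place the assumptions enter.

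\textbf{Projector onto $\ker(J)$.} Next I will check the projector properties of $\Pi_H$.
\begin{itemize}
  \item $J\Pi_H = J - (JH^{-1}J^\top)M^{-1}J = J - J = 0$, so $\operatorname{range}(\Pi_H)\subseteq\ker(J)$.
  \item If $Jv=0$, then $\Pi_H v = v$. Hence $\operatorname{range}(\Pi_H) = \ker(J)$, and $\Pi_H^2=\Pi_H$ follows because $\Pi_H$ fixes its range.
\end{itemize}

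\textbf{$H$-orthogonality.} I will show $\Pi_H$ is self-adjoint in the inner product $\langle u,v\rangle_H = u^\top H v$, i.e.\ $H\Pi_H$ is symmetric. Indeed
\[
H\Pi_H = H - J^\top M^{-1} J,
\]
which is symmetric since $M^{-1}$ is symmetric. Equivalently, $\ker(\Pi_H) = \operatorname{range}(H^{-1}J^\top)$, and this is $H$-orthogonal to $\ker(J)$ because $(H^{-1}J^\top w)^\top H v = w^\top J v = 0$ for all $v\in\ker(J)$. An idempotent with this property is the $H$-orthogonal projector onto $\ker(J)$.

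\textbf{The identities in \eqref{eq:PH_proj_identity}.}
\begin{itemize}
  \item Expanding gives $\Pi_H H^{-1} = H^{-1} - H^{-1}J^\top M^{-1}JH^{-1}$, which is exactly \eqref{eq:P_H}, so $P_H = \Pi_H H^{-1}$.
  \item Then $JP_H = (J\Pi_H)H^{-1} = 0$.
  \item Since $P_H$ is visibly symmetric, $P_HJ^\top = (JP_H)^\top = 0$. Alternatively, compute directly: $H^{-1}J^\top - H^{-1}J^\top M^{-1}M = 0$.
\end{itemize}

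No step is a real obstacle. The only care needed is the invertibility of $M$, and stating the precise sense of ``$H$-orthogonal,'' namely self-adjointness with respect to $\langle\cdot,\cdot\rangle_H$.
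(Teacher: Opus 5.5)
Your proof is correct and follows essentially the same direct matrix computation as the paper: $H$-self-adjointness from the symmetry of $H\Pi_H = H - J^\top M^{-1}J$, $P_H=\Pi_H H^{-1}$ by expansion, and $P_HJ^\top=(JP_H)^\top=0$ from the symmetry of $P_H$. The differences are minor: you obtain idempotency from $J\Pi_H=0$ together with $\Pi_H v=v$ on $\ker(J)$ rather than by squaring $\Pi_H$, and you explicitly verify $M\succ 0$ and $\mathrm{range}(\Pi_H)=\ker(J)$, which the paper's proof only asserts or leaves implicit, so your version is slightly more complete.
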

Proposition~\ref{prop:proj_props} shows that $P_H$ can be read as curvature scaling followed by projection onto feasible directions.
The proof is given in Appendix~\ref{app:proof_proj}.

% 3.4
\subsection{Regret Gradient and the Projected Prediction Error}
\label{sec:geometry}

We now connect the sensitivity operator to the regret gradient.
Let $e=\hat{c}-c$ denote the prediction error.
Under Assumptions~\ref{assum:convex}--\ref{assum:scs}, the regret is differentiable with respect to $\hat{c}$ on any neighborhood where the active set is fixed.
This allows the regret gradient in \eqref{eq:regret_gradient_c} to be simplified in closed form via the sensitivity operator $P_H$ defined in \eqref{eq:P_H}.

\begin{theorem}[Regret Gradient via Tangent-Space Projection]
\label{thm:regret_gradient}
Under Assumptions~\ref{assum:convex}--\ref{assum:scs}, the regret gradient admits the closed form
\begin{equation*}
\label{eq:regret_grad_main}
\nabla_{\hat{c}}\mathcal{R}(\hat{c};c) = P_H(\hat{c}-c) = P_H e,
\end{equation*}
where $P_H$ is defined in \eqref{eq:P_H}.
\end{theorem}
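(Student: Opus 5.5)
The proof starts from the chain-rule expression \eqref{eq:regret_gradient_c} and uses the sensitivity formula \eqref{eq:solution_sensitivity}. Fix $\hat c$ in a neighborhood where the active set $\mathcal{A}(\hat c)$ is invariant; Assumptions~\ref{assum:convex}--\ref{assum:scs} guarantee such a neighborhood exists. There $z^*$ is differentiable with $\partial z^*/\partial\hat c=-P_H$. Since $P_H$ is symmetric ($H^{-1}$ and $JH^{-1}J^\top$ are symmetric), \eqref{eq:regret_gradient_c} becomes
\begin{equation*}
\nabla_{\hat c}\mathcal{R} = -P_H\big[\nabla\phi(z^*(\hat c)) + c\big].
\end{equation*}
The task is therefore to replace $\nabla\phi(z^*(\hat c))$ by something expressed through $\hat c$.

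For this I use the first line of the reduced KKT system \eqref{eq:kkt_reduced}, which gives $\nabla\phi(z^*)=-\hat c - J^\top y^*$. Substituting,
\begin{equation*}
\nabla_{\hat c}\mathcal{R} = -P_H\big[c-\hat c - J^\top y^*\big] = P_H(\hat c - c) + P_H J^\top y^*.
\end{equation*}
Proposition~\ref{prop:proj_props} gives $P_HJ^\top=0$, so the dual term vanishes and $\nabla_{\hat c}\mathcal{R}=P_He$. Note that $J$, $y^*$ and $H$ are evaluated at $z^*(\hat c)$, the predicted decision, which is the point where the chain rule is applied. The inactive inequality multipliers are zero by Assumption~\ref{assum:scs}, so \eqref{eq:kkt} does reduce to \eqref{eq:kkt_reduced}.

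The main obstacle is not the algebra but justifying the differentiability step. I need the regret to be differentiable in $\hat c$, and $\partial z^*/\partial\hat c$ to equal $-P_H$ rather than just satisfy the linearized system formally. For this I invoke the stability result cited in Appendix~\ref{app:regularity}. Under strict convexity, LICQ and strict complementarity, the active set is locally constant. Hence $(z^*,y^*)$ solves $F(z,y,\hat c)=0$ from \eqref{eq:implicit_kkt} locally. The Jacobian of $F$ in $(z,y)$ is the KKT matrix in \eqref{eq:kkt_system}, which is invertible, so the implicit function theorem applies. This yields $C^1$ dependence of $z^*$ on $\hat c$, and the regret is then differentiable by composition with the smooth $\phi$.

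Two minor technical points remain. First, I assume $\phi$ is $C^2$ near $z^*$ so that $H$ exists and the implicit function theorem holds. Second, the symmetry of $P_H$ justifies writing the transpose in \eqref{eq:regret_gradient_c} as $P_H$ itself.
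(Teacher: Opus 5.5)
Your proof is correct and follows the paper's argument essentially step for step. You apply the chain rule with $\partial z^*/\partial\hat c=-P_H$ and the symmetry of $P_H$, substitute the reduced KKT stationarity condition $\nabla\phi(z^*)=-\hat c-J^\top y^*$, and eliminate the dual term via $P_HJ^\top=0$ from Proposition~\ref{prop:proj_props}; your added implicit-function-theorem justification of differentiability (with $\phi$ assumed $C^2$) simply makes explicit what the paper delegates to Appendix~\ref{app:regularity}.
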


Theorem~\ref{thm:regret_gradient} implies that regret descent replaces the MSE gradient $e$ with a projected error $P_He$ induced by the local active constraints and curvature.
The proof combines the chain rule with the sensitivity relation \eqref{eq:solution_sensitivity} and the projection property in \Cref{prop:proj_props}. A complete proof appears in Appendix~\ref{app:proof_regret}. 

Since $P_H$ projects onto $\mathcal{T}=\ker(J)$, it removes all normal components, leading to two key properties. First, the learning signal filters out normal directions. Second, it remains invariant to normal perturbations whenever the active set is fixed.
These properties are formalized as follows.

\begin{corollary}[Normal Filtering and Normal Invariance]
\label[corollary]{cor:normal_filter_invariance}
Let $\mathcal{N}=\mathrm{range}(J^\top)$ denote the normal space to the tangent space $\mathcal{T}$.
For any $\delta_N\in\mathcal{N}$,
\begin{equation}
\label{eq:normal_filter}
P_H\delta_N = 0.
\end{equation}
If the active set remains unchanged, then for all $\alpha\in\mathbb{R}$,
\begin{equation}
\label{eq:normal_invariance}
\nabla_{\hat{c}}\mathcal{R}(\hat{c}+\alpha\delta_N;c) = \nabla_{\hat{c}}\mathcal{R}(\hat{c};c).
\end{equation}
\end{corollary}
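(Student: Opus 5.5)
The first claim, $P_H\delta_N=0$ for every $\delta_N\in\mathcal{N}$, comes straight from \Cref{prop:proj_props}. Any $\delta_N\in\mathcal{N}=\mathrm{range}(J^\top)$ can be written as $\delta_N=J^\top w$ for some $w\in\mathbb{R}^k$. The identity $P_HJ^\top=0$ in \eqref{eq:PH_proj_identity} then gives $P_H\delta_N=P_HJ^\top w=0$. As a sanity check, the identity also follows from \eqref{eq:P_H} directly:
\[
P_HJ^\top = H^{-1}J^\top - H^{-1}J^\top (JH^{-1}J^\top)^{-1}(JH^{-1}J^\top) = 0 .
\]
This is well defined because $JH^{-1}J^\top$ is invertible under Assumptions~\ref{assum:convex}--\ref{assum:licq}.

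For the invariance claim \eqref{eq:normal_invariance}, I will apply Theorem~\ref{thm:regret_gradient} at both $\hat c$ and $\hat c+\alpha\delta_N$ and compare the results. The key observation is that $P_H$ depends on $\hat c$ only through two quantities: the active set $\mathcal{A}(\hat c)$, which determines $J$, and the Hessian $H=\nabla^2\phi(z^*(\hat c))$. By hypothesis the active set is unchanged, so $J$ is the same at both points. To show that $z^*$ is also unchanged, I will verify directly that the old primal--dual pair stays optimal. Set $z'=z^*(\hat c)$ and $y'=y^*-\alpha w$. The reduced KKT conditions \eqref{eq:kkt_reduced} at the new cost become
\[
\nabla\phi(z')+\hat c+\alpha J^\top w+J^\top y' = \nabla\phi(z^*)+\hat c + J^\top y^* = 0, \qquad Jz'=\tilde b ,
\]
so both equations hold.

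The main obstacle is that the reduced system alone does not certify optimality for the full problem \eqref{eq:opt}. I must also check primal feasibility of the inactive constraints and dual feasibility of the active multipliers. Inactive constraints are strictly satisfied because $z'$ itself is unchanged. The active inequality multipliers shift to $\mu^*_{\mathcal A}-\alpha w_{\mathcal A}$, and they must stay nonnegative. I will read the hypothesis that ``the active set remains unchanged'' as guaranteeing this, i.e., as saying the full KKT system, including the sign conditions, holds with the same active set. For small $|\alpha|$ it follows automatically from Assumption~\ref{assum:scs}, since $\mu^*_{\mathcal A}>0$.

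Under that reading, the full KKT conditions \eqref{eq:kkt} hold at $z'$. Convexity of \eqref{eq:opt} makes $z'$ optimal, and Assumption~\ref{assum:convex} makes it the unique optimum, so $z^*(\hat c+\alpha\delta_N)=z^*(\hat c)$. Hence $H$ is unchanged, $J$ is unchanged, and therefore $P_H$ is the same operator at both points. Theorem~\ref{thm:regret_gradient} then gives
\[
\nabla_{\hat c}\mathcal R(\hat c+\alpha\delta_N;c)=P_H(e+\alpha\delta_N)=P_He+\alpha P_H\delta_N=P_He ,
\]
where the last step uses the first claim. This yields \eqref{eq:normal_invariance}.
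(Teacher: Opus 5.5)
Your proof is correct. The normal-filtering part is the same as the paper's: write $\delta_N=J^\top w$ and use $P_HJ^\top=0$ from \Cref{prop:proj_props}.

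For the invariance part you are more careful than the paper. The paper's proof only asserts that, with the active set unchanged, ``the same Jacobian $J$ and projection operator $P_H$ apply,'' and then expands $P_H(e+\alpha\delta_N)$. That step silently treats $H=\nabla^2\phi(z^*)$ as fixed. This is automatic when $\phi$ is quadratic, which covers the smoothed-LP and MVO settings the paper uses, but it is not justified for a general convex $\phi$. You close that gap by showing the primal solution does not move: the shift $\alpha J^\top w$ is absorbed into the multipliers via $y^*\mapsto y^*-\alpha w$, so $z^*$, and with it $H$, $J$ and $P_H$, are identical at both points. Your route therefore covers non-quadratic $\phi$; the paper's is shorter but implicitly assumes constant curvature.

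One refinement: your ``reading'' of the hypothesis is not an extra assumption, because it follows from the hypothesis itself. Let $z''$ be the optimum at $\hat c+\alpha\delta_N$ and suppose its active set is $\mathcal A$. Then its KKT conditions are the reduced KKT conditions at the shifted cost. Since $(\alpha J^\top w)^\top z=\alpha w^\top\tilde b$ is constant on $\{Jz=\tilde b\}$, $z''$ also minimizes the reduced problem at $\hat c$. That minimizer is unique, because $\phi$ is convex and $H\succ 0$ at $z^*$, so $z''=z^*$. LICQ then forces the new active multipliers to equal $\mu^*_{\mathcal A}-\alpha w_{\mathcal A}$, and dual feasibility at the new optimum gives their nonnegativity.

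Both you and the paper still invoke the theorem at $\hat c+\alpha\delta_N$. That requires local invariance of the active set there, not just nonnegative multipliers, and this is what the hypothesis is meant to supply.
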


A proof is given in Appendix~\ref{app:proof_corollary}.
Corollary~\ref{cor:normal_filter_invariance} exposes the structural advantage of DFL. Prediction errors orthogonal to the feasible region do not affect the downstream decision, so DFL naturally suppresses such errors.

As shown in Figure~\ref{fig:pear_framework}, $H^{-1}$ rescales the prediction error $e$ according to the local curvature of the objective $\phi$. In directions where the objective is flatter, smaller eigenvalues of $H$ amplify the corresponding error components through $H^{-1}e$, while directions with steep curvature are correspondingly suppressed~\citep{nocedal2006numerical}. This curvature-weighted error reveals which cost perturbations most affect the solution across different geometric directions.

The tangent projector then filters this rescaled error by projecting onto the feasible tangent space $\mathcal{T} = \ker(J)$, removing all components orthogonal to this space. The complete regret gradient is thus
\begin{equation*}
\nabla_{\hat{c}}\mathcal{R}(\hat{c};c) = \underbrace{\left(I - H^{-1}J^\top(JH^{-1}J^\top)^{-1}J\right)}_{\text{tangent projector}} \underbrace{\left(H^{-1}\right)}_{\substack{\text{curvature} \\ \text{rescaler}}} e,
\end{equation*}
which first rescales by local curvature, then projects onto the tangent space of feasible directions. This geometric decomposition is also characterized in~\citet{gould2021deep}.

% This selective sensitivity to tangent-space components reveals the geometric origin of this filtering, as shown in Figure~\ref{fig:pear_framework}. 
% The sensitivity operator $P_H = \Pi_H H^{-1}$ from Equation~\eqref{eq:PH_proj_identity} achieves this through two complementary operations.
% The $H^{-1}$ rescales the prediction error according to local curvature, amplifying errors in flatter directions where cost perturbations most affect the solution.
% The tangent projector $\Pi_H$ then filters this rescaled error onto the feasible tangent space $\mathcal{T} = \ker(J)$, removing components that cannot affect downstream decisions.
% This geometric decomposition is also characterized in~\citet{gould2021deep}.

% table 1
\newcommand{\mstd}[2]{#1{\tiny$\pm$#2}}
\newcommand{\best}[1]{\textbf{#1}}
\newcommand{\second}[1]{\underline{#1}}

\section{Computing the Projected-Error Gradient}
\label{sec:methodology}

This section describes how to compute the regret gradient $\nabla_{\hat{c}}\mathcal{R} = P_H e$ derived in \Cref{sec:tangent_space} without explicitly forming the dense projection operator $P_H$.
PEAR proceeds in three steps: (i) solve the forward problem to obtain primal and dual solutions, (ii) identify the active constraints and form the reduced Jacobian $J$, and (iii) compute $P_H e$ via a Schur-complement reduction over the active set.

%------------------------------------------------------------------------------

% \subsection{Forward Pass: Active-Set Detection}
% \label{subsec:forward}
% We identify the active set from the primal--dual solution returned by the forward solver using a complementary slackness criterion.
% In our implementation, we solve \eqref{eq:opt} with OSQP~\citep{stellato2020osqp} after rewriting the constraints into $l \le Ax \le u$.
% Let $z^* = Ax^*$.
% We identify binding constraints via a complementary slackness test:
% \begin{equation}
% \label{eq:active_set_detect}
% \begin{split}
% \mathcal{L} &= \{ i \mid (z_i^* - l_i) < -y_i^* \}, \\
% \mathcal{U} &= \{ i \mid (u_i - z_i^*) < y_i^* \},
% \end{split}
% \end{equation}
% where $\mathcal{L}$ and $\mathcal{U}$ are the lower- and upper-active sets, and $\mathcal{A} = \mathcal{L} \cup \mathcal{U}$.
% Using $\mathcal{A}$, we construct the Jacobian $J$ as in \eqref{eq:active_matrix}.

\subsection{Forward Pass: Active-Set Detection}
\label{subsec:forward}
We identify the active set from the primal--dual solution returned by the forward solver using a complementary slackness criterion.
In our implementation, we solve \eqref{eq:opt} with OSQP~\citep{stellato2020osqp} after rewriting the constraints into $l \le Gz \le u$.
Let $r^* = Gz^*$ denote the constraint residuals.
We identify binding constraints via a complementary slackness test:
\begin{equation}
\label{eq:active_set_detect}
\begin{split}
\mathcal{L} &= \{ i \mid (r_i^* - l_i) < -y_i^* \}, \\
\mathcal{U} &= \{ i \mid (u_i - r_i^*) < y_i^* \},
\end{split}
\end{equation}
where $\mathcal{L}$ and $\mathcal{U}$ are the lower- and upper-active sets, and $\mathcal{A} = \mathcal{L} \cup \mathcal{U}$.
Using $\mathcal{A}$, we construct the Jacobian $J$ as in \eqref{eq:active_matrix}.

% main table 
\begin{table*}[t!]
\centering
\caption{\textbf{LP benchmarks.}
Normalized regret (\%, $\downarrow$) and training time (s, $\downarrow$), over 5 seeds.
Best in \textbf{bold} and second best \underline{underlined}.}
\label{tab:lp_deg_sweep_joint}
\small
\setlength{\tabcolsep}{3.5pt}
\renewcommand{\arraystretch}{1.15}
\begin{tabular}{@{}cl cc cc cc cc@{}}
\toprule
\multirow{2}{*}[-0.5ex]{\textbf{Task}} & \multirow{2}{*}[-0.5ex]{\textbf{Method}} &
\multicolumn{2}{c}{\textbf{Deg 2}} &
\multicolumn{2}{c}{\textbf{Deg 4}} &
\multicolumn{2}{c}{\textbf{Deg 6}} &
\multicolumn{2}{c}{\textbf{Deg 8}} \\
\cmidrule(lr){3-4}\cmidrule(lr){5-6}\cmidrule(lr){7-8}\cmidrule(lr){9-10}
& & Regret(\%) & Time(s) &
    Regret(\%) & Time(s) &
    Regret(\%) & Time(s) &
    Regret(\%) & Time(s) \\
\midrule
%==================== Shortest Path ====================
\multirow{6}{*}{\shortstack[c]{Shortest\\Path}}
& MSE  &
  \mstd{0.135}{0.037} & \best{\mstd{7.5}{0.8}} &
  \mstd{1.967}{0.565} & \best{\mstd{7.5}{2.1}} &
  \mstd{6.561}{1.514} & \best{\mstd{8.9}{1.5}} &
  \mstd{15.428}{3.085} & \best{\mstd{6.1}{0.1}} \\
& SPO+ &
  \mstd{0.120}{0.046} & \mstd{48.7}{11.6} &
  \best{\mstd{0.761}{0.242}} & \mstd{41.8}{10.8} &
  \second{\mstd{2.281}{0.935}} & \mstd{44.1}{5.4} &
  \second{\mstd{4.403}{1.491}} & \mstd{43.2}{7.5} \\
& PFYL &
  \mstd{0.177}{0.047} & \mstd{59.9}{3.2} &
  \mstd{0.946}{0.274} & \mstd{47.6}{12.0} &
  \mstd{2.435}{0.847} & \mstd{74.8}{22.9} &
  \mstd{5.236}{2.331} & \mstd{49.1}{12.1} \\
& DBB  &
  \mstd{0.509}{0.109} & \mstd{116.4}{34.1} &
  \mstd{2.131}{0.597} & \mstd{139.3}{39.1} &
  \mstd{7.003}{1.936} & \mstd{98.7}{34.3} &
  \mstd{14.828}{3.086} & \mstd{85.2}{17.2} \\
& LAVA &
  \best{\mstd{0.104}{0.036}} & \mstd{44.8}{8.1} &
  \mstd{0.851}{0.385} & \mstd{40.9}{3.9} &
  \mstd{2.483}{0.997} & \mstd{38.1}{2.4} &
  \mstd{6.189}{2.365} & \mstd{44.5}{6.0} \\
& PEAR (Ours) &
  \best{\mstd{0.104}{0.038}} & \second{\mstd{28.4}{5.4}} &
  \second{\mstd{0.774}{0.268}} & \second{\mstd{23.5}{0.4}} &
  \best{\mstd{2.138}{0.762}} & \second{\mstd{35.1}{4.9}} &
  \best{\mstd{4.246}{1.106}} & \second{\mstd{38.4}{2.5}} \\
\midrule
%==================== Knapsack ====================
\multirow{6}{*}{Knapsack}
& MSE  &
  \mstd{0.351}{0.020} & \best{\mstd{422.8}{44.5}} &
  \mstd{0.922}{0.066} & \best{\mstd{180.8}{6.2}} &
  \mstd{1.717}{0.226} & \best{\mstd{111.7}{11.8}} &
  \mstd{2.285}{0.205} & \best{\mstd{99.1}{15.8}} \\
& SPO+ &
  \second{\mstd{0.291}{0.015}} & \mstd{600.0}{0.0} &
  \second{\mstd{0.381}{0.036}} & \mstd{600.0}{0.0} &
  \second{\mstd{0.608}{0.066}} & \mstd{600.0}{0.0} &
  \second{\mstd{0.763}{0.075}} & \mstd{600.0}{0.0} \\
& PFYL &
  \mstd{0.506}{0.025} & \mstd{600.0}{0.0} &
  \mstd{0.442}{0.042} & \mstd{600.0}{0.0} &
  \mstd{0.723}{0.049} & \mstd{600.0}{0.0} &
  \mstd{1.056}{0.147} & \mstd{600.0}{0.0} \\
& DBB  &
  \mstd{1.852}{0.280} & \mstd{600.0}{0.0} &
  \mstd{0.910}{0.160} & \mstd{600.0}{0.0} &
  \mstd{1.040}{0.307} & \mstd{600.0}{0.0} &
  \mstd{1.081}{0.128} & \mstd{600.0}{0.0} \\
& LAVA &
  \mstd{0.438}{0.045} & \mstd{600.0}{0.0} &
  \mstd{0.478}{0.021} & \mstd{516.9}{81.1} &
  \mstd{0.820}{0.060} & \mstd{332.5}{111.0} &
  \mstd{1.351}{0.223} & \second{\mstd{182.8}{42.5}} \\
& PEAR (Ours) &
  \best{\mstd{0.287}{0.007}} & \second{\mstd{567.9}{45.1}} &
  \best{\mstd{0.315}{0.020}} & \second{\mstd{305.2}{40.8}} &
  \best{\mstd{0.388}{0.032}} & \second{\mstd{261.7}{37.4}} &
  \best{\mstd{0.437}{0.043}} & \mstd{271.7}{57.0} \\
\bottomrule
\end{tabular}
\end{table*}

\begin{algorithm}[t!]
\caption{\textbf{PEAR}: Projected Error as Regret-gradient}
\label{alg:pear}
\begin{algorithmic}[1]
\REQUIRE Predicted costs $\hat{c}=f_\theta(x)$, true costs $c$, problem data $(A,b,G,h)$
\REQUIRE Hessian $H=\nabla^2\phi(z^*)$ (or $H=\lambda I$ for LP), injection strength $\beta \in [0,1]$
\ENSURE Gradient signal $g=\nabla_{\hat{c}}\mathcal{R}(\hat{c};c)$
\STATE \textbf{// Forward: Active-Set Detection}
\STATE Solve \eqref{eq:opt} with cost $\hat{c}$ to obtain primal-dual pair $(z^*, y^*)$
\STATE Identify active set $\mathcal{A}$ via complementary slackness \eqref{eq:active_set_detect}
\STATE Form active constraint Jacobian $J \leftarrow \begin{bmatrix}A \\ G_{\mathcal{A}}\end{bmatrix}$
\STATE \textbf{// Backward: Compute $P_H e$ via Reduced System}
\STATE Compute prediction error $e \leftarrow \hat{c} - c$
\STATE Compute $x \leftarrow H^{-1}e$ and $r \leftarrow Jx$
\STATE Solve reduced system $(JH^{-1}J^\top)v = r$ for $v$
\STATE Compute projected gradient $g \leftarrow x - H^{-1}J^\top v$
\STATE \textbf{// Optional: Normal injection for LP}
\IF{$\beta > 0$}
    \STATE Compute normal component $n \leftarrow H^{-1}J^\top v$
    \STATE $g \leftarrow g + \beta \cdot \tfrac{\|g\|}{\|n\|} \cdot n$
\ENDIF
\STATE \textbf{return} $g$
\end{algorithmic}
\end{algorithm}

\subsection{Backward Pass: Efficient Gradient Computation}
\label{subsec:backward}

Our goal in the backward pass is to compute the regret gradient $g = \nabla_{\hat{c}}\mathcal{R}(\hat{c};c) = P_H e$ as derived in Theorem~\ref{thm:regret_gradient}, where $e = \hat{c} - c$. A direct computation of $P_H e$ would form the dense $n \times n$ projection operator $P_H$ explicitly, which becomes prohibitively expensive as dimension grows.

We instead compute $P_H e$ through a reduced linear system. The key insight is that under fixed active constraints, the primal perturbation $dz$ can be eliminated, leaving a reduced system determined only by the active constraints. This yields the same projected vector without forming $P_H$.

Under the regularity conditions and using the reduced KKT sensitivity system in \eqref{eq:kkt_system}, we set the cost perturbation to the prediction error $d\hat c = -e$ and obtain
\begin{equation}
\label{eq:kkt_perturb}
\begin{bmatrix}
H & J^\top \\
J & 0
\end{bmatrix}
\begin{bmatrix}
dz \\
dy
\end{bmatrix}
=
\begin{bmatrix}
e \\
0
\end{bmatrix},
\end{equation}
where $H=\nabla^2\phi(z^*)$ and $J$ is the stacked Jacobian of the active constraints.

To eliminate $dz$, we use Schur complement reduction. From the first block of \eqref{eq:kkt_perturb}, we express
\begin{equation}
\label{eq:elim_dz}
dz = H^{-1}(e - J^\top dy).
\end{equation}
From the second block row of \eqref{eq:kkt_perturb}, we have $Jdz=0$. Substituting \eqref{eq:elim_dz} into this relation and defining $x = H^{-1}e$ and $r = Jx$, we obtain the reduced system
\begin{equation}
\label{eq:schur_solve}
JH^{-1}J^\top dy = r.
\end{equation}
This system has dimension $k \times k$ only and operates entirely within the reduced space of active multipliers, where $k=p+|\mathcal A|$ consists of $p$ equalities and $|\mathcal A|$ active inequalities. Solving for $dy$, the projected gradient is then recovered as
\begin{equation}
\label{eq:recover_ph_e}
g = P_H e = x - H^{-1}J^\top dy,
\end{equation}
with detailed derivation in Appendix~\ref{app:vjp_derivation}.
In practice, we never form $H^{-1}$ explicitly and instead apply $H^{-1}$ through linear solves.

The computational advantage is substantial. Solving the $k \times k$ Schur complement system requires only matrix factorization and solving within the reduced space, whereas methods that differentiate through optimization solvers must handle full KKT systems. When the active set is sparse, this reduction yields significant efficiency gains without differentiating through solver iterations.

%------------------------------------------------------------------------------
\subsection{Linear Programs: Quadratic Smoothing and Normal Injection}
\label{subsec:lp_setting}

In linear programming problems, the objective function lacks curvature and the solution map becomes piecewise constant. Over regions where the active set remains locally invariant, small cost perturbations do not alter the optimal solution. This creates plateaus where the regret gradient vanishes. 

To obtain well-defined sensitivities while staying close to the original LP, we add a small quadratic regularization term $\phi(z) = \frac{\lambda}{2}\|z\|_2^2$ to our original problem~\eqref{eq:opt}, following~\citet{wilder2019melding}.
%This yields $H=\lambda I \succ 0$, and the computation in \Cref{subsec:backward} simplifies to
%\begin{equation}
%\label{eq:lp_ph_simplify}
%(JJ^\top+\lambda I)v=Je,
%\qquad
%P_He=\lambda^{-1}\bigl(e-J^\top v\bigr).
%\end{equation}
This yields $H = \lambda I \succ 0$, and the computation in Section~\ref{subsec:backward} simplifies to
\begin{equation}
\label{eq:lp_ph_simplify}
JJ^\top v = Je, \qquad P_H e = \lambda^{-1}(e - J^\top v).
\end{equation}
However, even after smoothing, the pure regret gradient $g=P_He$ can remain weak and nearly constant over large regions in LPs~\citep{mandi2025minimizing}.
To stabilize learning in these regimes and obtain a more informative update signal, we use normal injection.
PEAR provides the same active set quantities used to compute $g$, and the corresponding normal component can be computed from \eqref{eq:lp_ph_simplify} as $n=\lambda^{-1}J^\top v$.
We then form a controlled combination
\begin{equation}
\label{eq:normal_injection}
g_{\text{inj}} = g + \beta \cdot \frac{\|g\|}{\|n\|}\,n,
\end{equation}
where $\beta\in[0,1]$ controls the injection strength.

This update remains primarily driven by the regret signal, while selectively injecting normal space information when the tangent signal is weak.
Scaling by $\|g\|$ preserves the relative magnitude of the injected term and prevents the normal correction from dominating the update.
In flat regimes, this injection can enrich the gradient signal, helping the iterate reach neighborhoods where the active set changes and the regret signal becomes informative again. 

%------------------------------------------------------------------------------

% noise exp
\begin{table}[t!]
\centering
\caption{\textbf{LP benchmarks under noise.}
Normalized regret at degree 8 under varying noise levels.}
\label{tab:lp_noise_sweep}
\small
\setlength{\tabcolsep}{4pt}
\renewcommand{\arraystretch}{1.05}
\begin{tabular}{@{}cl ccc@{}}
\toprule
\multirow{2}{*}{\textbf{Task}} & \multirow{2}{*}{\textbf{Method}} & \multicolumn{3}{c}{\textbf{Noise Level ($\epsilon$)}} \\
\cmidrule(l){3-5}
& & 0.1 & 0.3 & 0.5 \\
\midrule
\multirow{6}{*}{\shortstack[c]{Shortest\\Path}}
& MSE  & \mstd{15.59}{2.66} & \mstd{17.08}{2.47} & \mstd{20.69}{2.95} \\
& SPO+ & \second{\mstd{4.74}{1.74}} & \second{\mstd{6.79}{2.56}} & \mstd{10.49}{2.47} \\
& PFYL & \mstd{5.44}{2.20} & \mstd{7.21}{2.81} & \second{\mstd{10.44}{2.83}} \\
& DBB  & \mstd{15.78}{3.85} & \mstd{17.95}{1.27} & \mstd{19.09}{1.89} \\
& LAVA & \mstd{6.60}{3.00} & \mstd{9.30}{3.78} & \mstd{12.46}{4.11} \\
& PEAR (Ours)& \best{\mstd{4.50}{1.86}} & \best{\mstd{5.88}{2.04}} & \best{\mstd{10.38}{2.54}} \\
\midrule
\multirow{6}{*}{Knapsack}
& MSE  & \mstd{2.33}{0.19} & \mstd{2.69}{0.19} & \mstd{3.24}{0.25} \\
& SPO+ & \second{\mstd{0.81}{0.07}} & \second{\mstd{1.12}{0.06}} & \second{\mstd{1.77}{0.10}} \\
& PFYL & \mstd{1.09}{0.15} & \mstd{1.40}{0.12} & \mstd{2.00}{0.13} \\
& DBB  & \mstd{1.23}{0.17} & \mstd{1.52}{0.26} & \mstd{2.07}{0.12} \\
& LAVA & \mstd{1.43}{0.21} & \mstd{1.85}{0.27} & \mstd{2.93}{0.38} \\
& PEAR (Ours)& \best{\mstd{0.49}{0.05}} & \best{\mstd{0.86}{0.05}} & \best{\mstd{1.58}{0.08}} \\
\bottomrule
\end{tabular}
\end{table}

% \begin{table}[t!]
% \centering
% \caption{\textbf{LP benchmarks under noise.}
% Normalized regret at degree 8 with varying noise levels.}
% \label{tab:lp_noise_sweep}
% \small
% \setlength{\tabcolsep}{4pt}
% \renewcommand{\arraystretch}{1.05}
% \begin{tabular}{@{}cl ccc@{}}
% \toprule
% \multirow{2}{*}{\textbf{Task}} & \multirow{2}{*}{\textbf{Method}} & \multicolumn{3}{c}{\textbf{Noise Level ($\epsilon$)}} \\
% \cmidrule(l){3-5}
% & & 0.1 & 0.3 & 0.5 \\
% \midrule
% \multirow{5}{*}{\shortstack[c]{Shortest\\Path}}
% & MSE  & \mstd{15.59}{2.66} & \mstd{17.08}{2.47} & \mstd{20.69}{2.95} \\
% & SPO+ & \second{\mstd{4.74}{1.74}} & \second{\mstd{6.79}{2.56}} & \mstd{10.49}{2.47} \\
% & PFYL & \mstd{5.44}{2.20} & \mstd{7.21}{2.81} & \second{\mstd{10.44}{2.83}} \\
% & LAVA & \mstd{6.60}{3.00} & \mstd{9.30}{3.78} & \mstd{12.46}{4.11} \\
% & PEAR & \best{\mstd{4.50}{1.86}} & \best{\mstd{5.88}{2.04}} & \best{\mstd{10.38}{2.54}} \\
% \midrule
% \multirow{5}{*}{Knapsack}
% & MSE  & \mstd{2.33}{0.19} & \mstd{2.69}{0.19} & \mstd{3.24}{0.25} \\
% & SPO+ & \second{\mstd{0.81}{0.07}} & \second{\mstd{1.12}{0.06}} & \second{\mstd{1.77}{0.10}} \\
% & PFYL & \mstd{1.09}{0.15} & \mstd{1.40}{0.12} & \mstd{2.00}{0.13} \\
% & LAVA & \mstd{1.43}{0.21} & \mstd{1.85}{0.27} & \mstd{2.93}{0.38} \\
% & PEAR & \best{\mstd{0.49}{0.05}} & \best{\mstd{0.86}{0.05}} & \best{\mstd{1.58}{0.08}} \\
% \bottomrule
% \end{tabular}
% \end{table}
%------------------------------------------------------------------------------
% qp mvo table
\begin{table*}[t!]
\centering
\caption{\textbf{Real-world QP.}
\label{tab:qp_mvo_table}
Normalized regret (\%, $\downarrow$), training time (s, $\downarrow$), and portfolio performance metrics over 5 seeds.
Best in \textbf{bold} and second best \underline{underlined}.}
\footnotesize
\setlength{\tabcolsep}{8pt}
\renewcommand{\arraystretch}{1.1}
\begin{tabular}{@{}l cccccc@{}}
\toprule
\textbf{Method} &
Regret(\%) & Time(s) &
Cum.Ret.(\%) & Ann.Ret.(\%) & Sharpe & MDD(\%) \\
\midrule
MSE &
  \mstd{90.80}{2.05} & \best{\mstd{33.0}{9.5}} &
  \mstd{89.22}{21.67} & \mstd{37.51}{6.18} & \mstd{0.92}{0.14} &
  \mstd{-40.00}{4.49} \\
QPTH &
  \mstd{87.85}{7.31} & \mstd{147.6}{42.4} &
  \second{\mstd{139.77}{115.74}} & \second{\mstd{47.88}{26.99}} & \second{\mstd{1.15}{0.63}} &
  \second{\mstd{-34.94}{14.51}} \\
CvxpyLayers &
  \second{\mstd{87.36}{7.36}} & \mstd{321.9}{100.9} &
  \mstd{134.88}{79.13} & \mstd{47.39}{22.73} & \mstd{1.15}{0.52} &
  \mstd{-36.47}{6.76} \\
PEAR (Ours) &
  \best{\mstd{85.38}{6.91}} & \second{\mstd{122.3}{38.6}} &
  \best{\mstd{184.19}{86.24}} & \best{\mstd{58.84}{19.49}} & \best{\mstd{1.44}{0.47}} &
  \best{\mstd{-29.35}{5.36}} \\
\bottomrule
\end{tabular}
\end{table*}

%------------------------------------------------------------------------------
\section{Experiments}
\label{sec:experiments}

This section evaluates PEAR on two synthetic LP benchmarks and a real-world QP task, measuring decision quality via normalized regret and training efficiency.
Beyond standard benchmarks, we test robustness under cost noise and constraint shifts at test time.
All results are reported over 5 random seeds. Details in Appendix~\ref{app:benchmark_details} and ~\ref{app:exp_details}.

%------------------------------------------------------------------------------
\subsection{Synthetic LP Benchmarks}
\label{subsec:lp}

\paragraph{Setup.}
Our experimental setup follows \citet{berden2025solver} for two standard LP tasks---Shortest Path and Knapsack---while testing on additional degree configurations and different problem instances.
\textit{Shortest Path} predicts edge costs on a $5\times5$ grid (40 edges) and solves for the minimum-cost path.
\textit{Knapsack} predicts item values for a 100-item $0$--$1$ knapsack instance and selects an optimal subset under a capacity constraint.
For Knapsack, methods requiring continuous sensitivity (ours and LAVA) train on the LP relaxation; we report regret on the original problem.
The cost vectors for both tasks are generated from feature vectors $x \sim \mathcal{N}(0, I_p)$ via a polynomial mapping~\citep{elmachtoub2022smart, tang2022pyepo}:
\begin{equation}
\label{eq:data_gen}
c_{ij} = \left[ \frac{1}{3.5^{\text{deg}}} \left( \frac{1}{\sqrt{p}} (\mathcal{B} x_i)_j + 3 \right)^{\text{deg}} + 1 \right] \cdot \epsilon_{ij},
\end{equation}
where $\mathcal{B} \in \{0,1\}^{d \times p}$ is a random binary matrix and $p$ is the feature dimension.
In the main experiments, we set $\epsilon_{ij} = 1$ (no noise).
To evaluate robustness under model misspecification, we also test with multiplicative noise $\epsilon_{ij} \sim U(1-\epsilon, 1+\epsilon)$ for $\epsilon \in \{0.1, 0.3, 0.5\}$, which perturbs the cost-to-feature relationship and simulates scenarios where the true mapping is not perfectly captured by the polynomial model.

Baselines include MSE (two-stage) and representative DFL methods:
SPO+~\citep{elmachtoub2022smart}, DBB~\citep{poganvcic2019differentiation}, PFYL~\citep{berthet2020learning}, and LAVA~\citep{berden2025solver}.
The first three are widely used baselines in the DFL literature, while LAVA is a recent solver-free approach that precomputes adjacent vertices of the LP polytope.
All methods are implemented in PyEPO~\citep{tang2022pyepo} with recommended hyperparameter settings, using a linear predictor, Adam optimizer, and early stopping when validation regret fails to improve by 1\% for three consecutive evaluations.
Training is capped at 600 seconds.
Each task uses 1{,}000 training, 500 validation, and 500 test instances.
We report normalized regret, defined as
\begin{equation}
\frac{\sum_{i=1}^{n_{\text{test}}} \left( c_i^\top z^*(\hat{c}_i) - c_i^\top z^*(c_i) \right)}{\sum_{i=1}^{n_{\text{test}}} |c_i^\top z^*(c_i)|}.
\end{equation}

\paragraph{Results.}
Table~\ref{tab:lp_deg_sweep_joint} reports normalized regret and training time across polynomial degrees.
On Shortest Path, our method matches or outperforms all baselines at every degree, with larger gains as nonlinearity increases.
At degree 4, SPO+ achieves marginally lower regret, but PEAR trains roughly twice as fast.
On Knapsack, our method achieves the lowest regret across all configurations, with the margin over the next-best method widening as degree increases.
In terms of training time, our approach is consistently among the fastest DFL methods.
On Knapsack, several baselines reach the timeout without converging, whereas our method terminates earlier in most cases.
At degree 8, LAVA is slightly faster despite being solver-free, but its regret is more than three times higher than ours.
Table~\ref{tab:lp_noise_sweep} evaluates robustness under multiplicative noise in the cost generation process, simulating scenarios where the mapping from features to costs is not perfectly specified.
Even under high noise levels ($\epsilon=0.5$), PEAR consistently achieves the lowest regret on both tasks.
Overall, these results demonstrate that computing regret gradients via tangent-space projection yields strong decision quality and training efficiency, with robustness to noise in the cost generation process.

%------------------------------------------------------------------------------
\subsection{Real-World QP: Mean--Variance Optimization}
\label{subsec:qp}
\begin{figure*}[t!]
    \centering
    \begin{subfigure}[t]{0.33\textwidth}
        \centering
        \includegraphics[width=\linewidth]{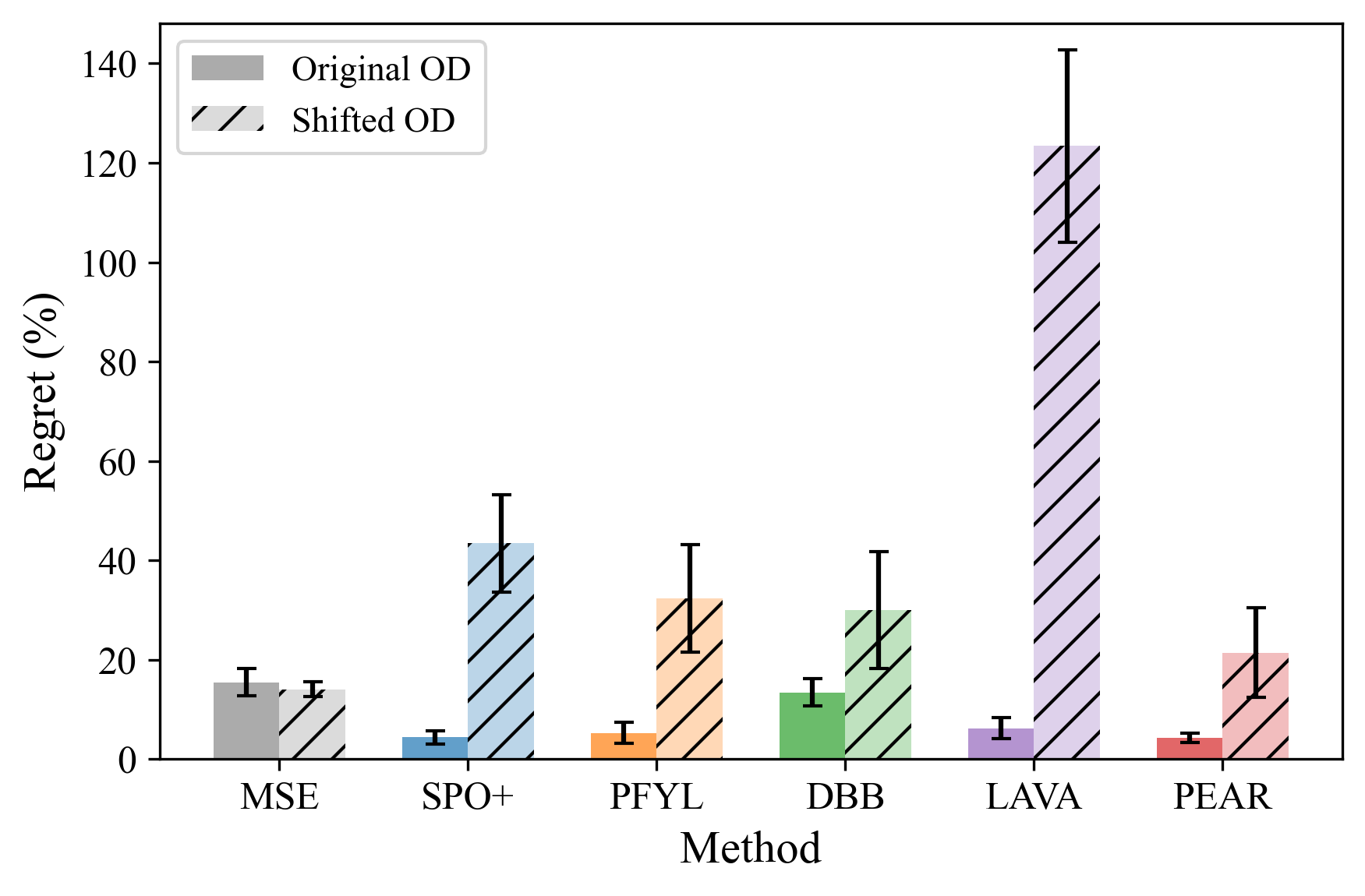}
        \caption{Origin-Destination shift (Shortest Path)}
    \end{subfigure}\hfill
    \begin{subfigure}[t]{0.33\textwidth}
        \centering
        \includegraphics[width=\linewidth]{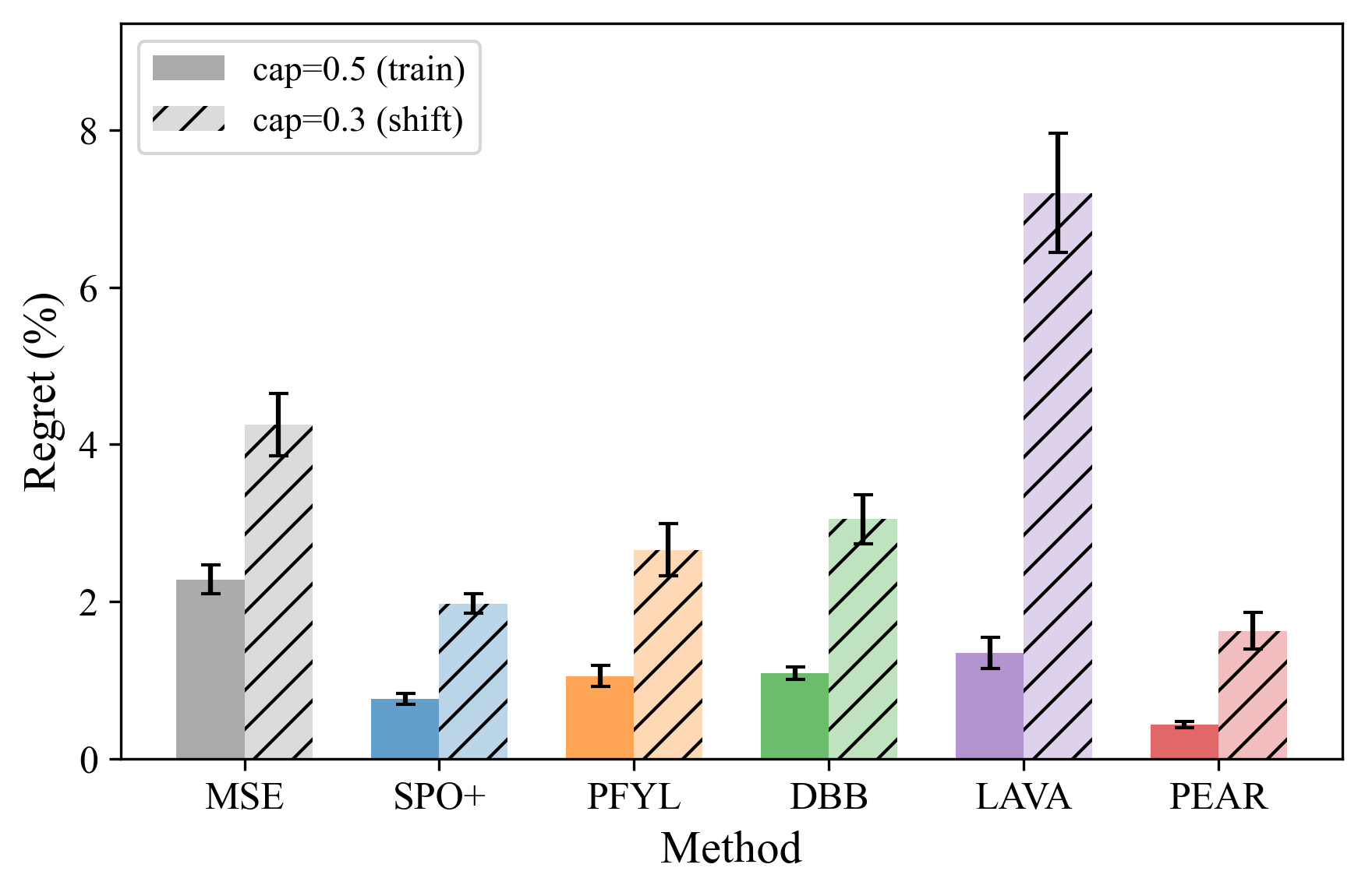}
        \caption{Capacity shift (Knapsack)}
    \end{subfigure}\hfill
    \begin{subfigure}[t]{0.33\textwidth}
        \centering
        \includegraphics[width=\linewidth]{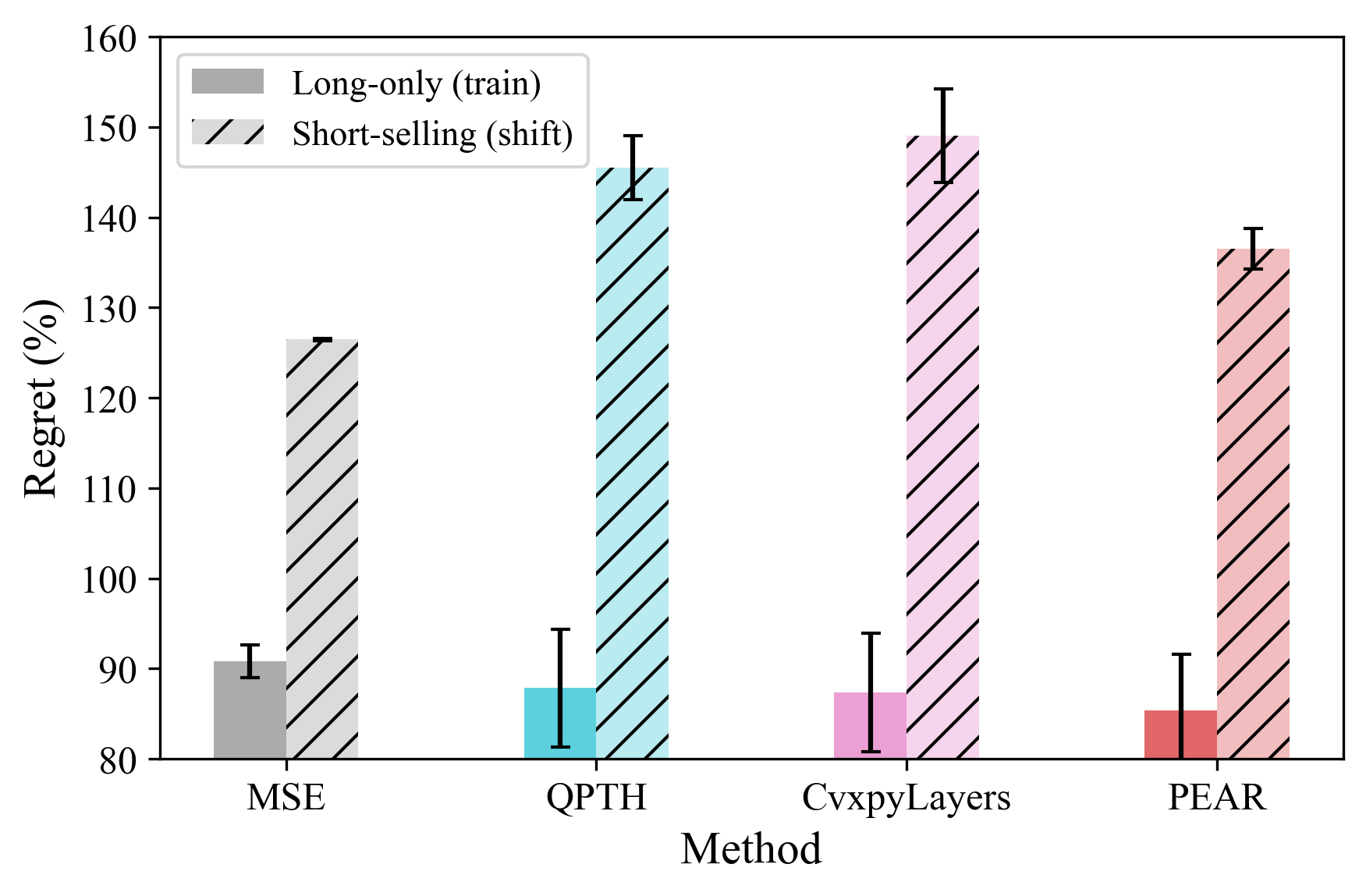}
        \caption{Short-selling shift (MVO)}
    \end{subfigure}
    \caption{\textbf{Robustness under constraint shifts.} We train on an original train set and evaluate under shifted constraints. Our method maintains the lowest regret among DFL methods across all three tasks. Additional results are in Appendix~\ref{app:add_results}.}
    \label{fig:constraint_shift}
\end{figure*}
\paragraph{Setup.}
We build on the experimental framework of \citet{hwang2025decision} for mean--variance optimization on S\&P 100 constituents over 2010--2024. The predictor estimates 21-day ahead expected returns from a 63-day lookback window using DLinear~\citep{zeng2023transformers}. The downstream optimizer constructs a long-only portfolio with risk aversion $\lambda=2.0$. We rebalance every 21 trading days and include a 0.1\% turnover cost. 

We compare against the two-stage MSE baseline and two differentiable QP layers, QPTH~\citep{amos2017optnet} and CvxpyLayers~\citep{agrawal2019differentiable}.
We evaluate normalized regret, cumulative and annualized returns, Sharpe ratio, and maximum drawdown.

\paragraph{Batched computation.}
In our implementation, the forward pass calls OSQP on each sample to obtain a primal--dual solution and identify the active constraints.
We encode each active set as a binary mask indicating which inequality bounds are binding.
The backward pass then proceeds entirely on GPU via batched linear algebra.
Specifically, we (i) factorize $H=\lambda\Sigma$ via batched Cholesky decomposition, (ii) form the Schur complement using the binary masks, and (iii) solve for the dual variables and reconstruct $g = H^{-1}(e - J^\top v)$.
This decouples the forward solver from the backward computation, allowing efficient gradient computation without a differentiable optimization layer.

% \paragraph{Results.}
% Table~\ref{tab:qp_mvo_table} summarizes performance on real-world mean--variance optimization.
% PEAR attains the lowest normalized regret and translates this gain into stronger realized portfolios, achieving the highest cumulative and annualized returns, the lowest maximum drawdown, and the best Sharpe ratio.
% PEAR is also faster than differentiable QP layers.
% Although all approaches are KKT-based, differentiable optimization layers can become unstable near active-set boundaries due to numerical tolerances and ill-conditioning.
% PEAR instead computes the regret-aligned projection directly, yielding more stable gradient updates.
% This accumulated difference in gradient quality likely explains the gap in out-of-sample regret and portfolio performance.
\paragraph{Results.}
Table~\ref{tab:qp_mvo_table} summarizes performance on real-world mean--variance optimization.
PEAR attains the lowest normalized regret and translates this gain into stronger realized portfolios, achieving the highest cumulative and annualized returns, the lowest maximum drawdown, and the best Sharpe ratio.
PEAR is also faster than differentiable QP layers.
While all approaches are KKT-based, differentiable optimization layers can become unstable near active-set boundaries; PEAR instead computes the regret-aligned projection directly, yielding more stable gradients that likely explain the gap in out-of-sample performance.
See Appendix~\ref{app:ppa} for additional analysis on model behavior during downturns.

%------------------------------------------------------------------------------
\subsection{Robustness Under Constraint Shifts}
\label{subsec:constraint}

Most DFL evaluations assume a fixed feasible region at train and test time.
In practice, however, constraints often change after deployment due to resource limits, policy updates, or risk controls.
To evaluate robustness, we train models at degree 8 (the highest-degree setting) and test under modified constraints while keeping the cost-generation process fixed. Figure~\ref{fig:constraint_shift} presents three complementary shifts. Details on the modified problem structures are provided in Appendix~\ref{app:shift}.

In \textit{Shortest Path} (a), we swap the origin and destination nodes symmetrically across the grid.
Existing DFL methods degrade dramatically under this shift, suggesting they memorize optimal paths rather than learning transferable cost structure.
Our method exhibits the smallest degradation among DFL baselines, while MSE appears most robust because it treats all edges equally during training.

In \textit{Knapsack} (b), we tighten the capacity constraint from 0.5 to 0.3.
Our method again achieves the lowest regret among DFL methods.
Unlike Shortest Path, MSE degrades more substantially here, likely because knapsack decisions depend on value rankings rather than absolute cost predictions, an inductive bias that DFL methods capture but MSE does not.

In \textit{MVO} (c), we allow short-selling at test time while training on long-only constraints.
Unlike the previous two tasks, this shift expands the feasible region rather than restricting it.
Among DFL methods, PEAR achieves the lowest regret, while MSE again shows the strongest robustness.
However, the performance gap between methods is smaller than in the combinatorial settings, likely because all KKT-based methods compute similar gradients for the QP objective.

Overall, PEAR degrades the least among DFL baselines under all three constraint shifts and consistently maintains the lowest regret. See Appendix~\ref{app:discussion} for additional discussion.

\section{Conclusion}
\label{sec:conclusion}

We introduced PEAR, a framework for Decision-Focused Learning that obtains regret gradients by projecting prediction error onto the tangent space of active constraints.
This geometric view reveals that the DFL signal is the MSE gradient filtered of decision-irrelevant components.
By computing gradients via a reduced linear system rather than differentiating through solvers, PEAR achieves computational efficiency and numerical stability. Experiments on LP benchmarks and portfolio optimization demonstrate that PEAR achieves superior decision quality with reduced training time and improved robustness under constraint shifts.

% However, DFL methods including PEAR optimize for regret rather than prediction accuracy.
% In settings like Shortest Path where constraints shift significantly, MSE-based training can be more robust.
% Having established the geometric relationship between MSE and regret gradients, a promising direction for future work is to develop methods that balance both objectives using the tangent-normal decomposition.

\section*{Impact Statement}
This work contributes to decision-focused learning by characterizing regret gradients geometrically and enabling their efficient computation without differentiating through optimization solvers. While such progress may have broad societal effects through downstream decision systems, we do not anticipate specific negative impacts beyond those common to predictive models used in optimization pipelines.

\section*{Acknowledgments}
This work was supported by the National Research Foundation of Korea (NRF) grant funded by the Korea government (MSIT) (No.~RS-2025-24803208) and the Institute of Information \& Communications Technology Planning \& Evaluation (IITP) grant funded by the Korea government (MSIT) (No.~RS-2020-II201336, Artificial Intelligence Graduate School Program (UNIST)).

%\input{sections/3_Background}

%%%%%%%%%%%%%%%%%%%%%%%%%%%%%%%%%%%%%%%%%%%%%%%%%%%%%%%%%%%%%%%%%%%%%%%%%%%%%%%
% REFERENCES
%%%%%%%%%%%%%%%%%%%%%%%%%%%%%%%%%%%%%%%%%%%%%%%%%%%%%%%%%%%%%%%%%%%%%%%%%%%%%%%

% references without citing it in the main text, use \nocite
\nocite{langley00}

\bibliography{ref}
\bibliographystyle{icml2026}

%%%%%%%%%%%%%%%%%%%%%%%%%%%%%%%%%%%%%%%%%%%%%%%%%%%%%%%%%%%%%%%%%%%%%%%%%%%%%%%
% APPENDIX
%%%%%%%%%%%%%%%%%%%%%%%%%%%%%%%%%%%%%%%%%%%%%%%%%%%%%%%%%%%%%%%%%%%%%%%%%%%%%%%
\newpage
\appendix
\onecolumn

\appendix

%%%%%%%%%%%%%%%%%%%%%%%%%%%%%%%%%%%%%%%%%%%%%%%%%%%%%%%%%%%%%%%%%%%%%%%%%%%%%%%%%%%%%%%%
\section{Additional Theory}
\label{app:additional_theory}

\subsection{Regularity and Local Differentiability}
\label{app:regularity}

This appendix states standard conditions under which the solution mapping
$\hat c \mapsto z^*(\hat c)$ is locally single-valued and differentiable and the active set remains locally stable.
Under Assumptions~\ref{assum:convex}--\ref{assum:scs}, classical sensitivity results imply that there exists a neighborhood of $\hat c$
on which the active set $\mathcal A(\hat c)$ is constant and the KKT system admits a unique primal--dual solution; see, e.g.,
\citet{fiacco1983introduction, bonnans2013perturbation}.
Consequently, on such a neighborhood the inequality-constrained problem locally behaves as the reduced equality-constrained problem
in \Cref{sec:sensitivity}, and the implicit-function-based sensitivity analysis is valid.

\subsection{Derivation of the Reduced KKT Sensitivity}
\label{app:sensitivity}

For completeness, we derive \Cref{eq:solution_sensitivity,eq:P_H} from \Cref{eq:kkt_system} by block elimination.
Let $S=JH^{-1}J^\top$.
Since $H\succ0$ and $J$ has full row rank, $S$ is invertible.
From the first block row of \Cref{eq:kkt_system},
\[
Hdz + J^\top dy = -d\hat c
\quad\Rightarrow\quad
dz = -H^{-1}d\hat c - H^{-1}J^\top dy.
\]
Substituting into the second block row $Jdz=0$ gives
\[
-JH^{-1}d\hat c - JH^{-1}J^\top dy = 0
\quad\Rightarrow\quad
dy = -S^{-1}JH^{-1}d\hat c.
\]
Plugging back,
\[
dz
=
-H^{-1}d\hat c
+
H^{-1}J^\top S^{-1}JH^{-1}d\hat c
=
-P_H\,d\hat c,
\]
with $P_H$ as in \Cref{eq:P_H}.

\section{Proofs}
\label{app:proofs}

\subsection{Proof of \Cref{prop:proj_props}}
\label{app:proof_proj}

\begin{proof}
Let $S = J H^{-1} J^\top$, which is invertible under Assumption~\ref{assum:convex} and Assumption~\ref{assum:licq}.

\emph{(i)} We show $J P_H = 0$. By definition of $P_H$,
\begin{align}
J P_H &= J H^{-1} - J H^{-1} J^\top S^{-1} J H^{-1} \notag \\
&= J H^{-1} - S S^{-1} J H^{-1} \notag \\
&= J H^{-1} - J H^{-1} = 0.
\end{align}
For $P_H J^\top = 0$, note that $P_H$ is symmetric:
\begin{equation}
P_H^\top = H^{-1} - H^{-1} J^\top (J H^{-1} J^\top)^{-1} J H^{-1} = P_H,
\end{equation}
since $H^{-1}$ and $S^{-1}$ are both symmetric. Thus $P_H J^\top = (J P_H)^\top = 0$.

\emph{(ii)} For idempotency of $\Pi_H$:
\begin{align}
\Pi_H^2 &= (I - H^{-1} J^\top S^{-1} J)^2 \notag \\
&= I - 2H^{-1} J^\top S^{-1} J + H^{-1} J^\top S^{-1} J H^{-1} J^\top S^{-1} J \notag \\
&= I - 2H^{-1} J^\top S^{-1} J + H^{-1} J^\top S^{-1} S S^{-1} J \notag \\
&= I - 2H^{-1} J^\top S^{-1} J + H^{-1} J^\top S^{-1} J \notag \\
&= I - H^{-1} J^\top S^{-1} J = \Pi_H.
\end{align}
For $H$-self-adjointness, we show $\langle u, \Pi_H v \rangle_H = \langle \Pi_H u, v \rangle_H$ where $\langle a, b \rangle_H := a^\top H b$:
\begin{align}
\langle u, \Pi_H v \rangle_H &= u^\top H \Pi_H v \notag \\
&= u^\top H (I - H^{-1} J^\top S^{-1} J) v \notag \\
&= u^\top (H - J^\top S^{-1} J) v.
\end{align}
Since $H$ and $S^{-1}$ are symmetric, the matrix $H - J^\top S^{-1} J$ is symmetric, hence $\langle u, \Pi_H v \rangle_H = \langle \Pi_H u, v \rangle_H$.

\emph{(iii)} Direct expansion:
\begin{align}
\Pi_H H^{-1} &= (I - H^{-1} J^\top S^{-1} J) H^{-1} \notag \\
&= H^{-1} - H^{-1} J^\top S^{-1} J H^{-1} = P_H.
\end{align}
\end{proof}

%==============================================================================
\subsection{Proof of \Cref{thm:regret_gradient}}
\label{app:proof_regret}

\begin{proof}
Recall the regret function $\mathcal{R}(\hat{c}; c)
= f(z^*(\hat{c}); c) - f(z^*(c); c)$, where $f(z; c) = \phi(z) + c^\top z$.
Since $z^*(c)$ does not depend on $\hat{c}$, we have
\[
\nabla_{\hat{c}} \mathcal{R}(\hat{c}; c) = \nabla_{\hat{c}} f(z^*(\hat{c}); c).
\]

\emph{Step 1: Chain rule.}
On any neighborhood where the active set is fixed, the solution map is differentiable (Appendix~\ref{app:regularity}). As shown in Eq.~\eqref{eq:regret_gradient_c}, applying the chain rule yields
\begin{align}
\nabla_{\hat{c}} \mathcal{R}(\hat{c}; c)
&=
\left( \frac{\partial z^*}{\partial \hat{c}} \right)^\top
\nabla_z f(z; c)\big|_{z=z^*(\hat{c})}
\notag \\
&=
\left( \frac{\partial z^*}{\partial \hat{c}} \right)^\top
(\nabla \phi(z^*) + c).
\end{align}

\emph{Step 2: Substitute solution sensitivity.}
From \Cref{eq:solution_sensitivity}, $\frac{\partial z^*}{\partial \hat{c}} = -P_H$.
Since $P_H$ is symmetric (Appendix~\ref{app:proof_proj}), we obtain
\[
\nabla_{\hat{c}} \mathcal{R}(\hat{c}; c) = -P_H (\nabla \phi(z^*) + c).
\]

\emph{Step 3: Apply KKT stationarity.}
From the reduced KKT stationarity \Cref{eq:kkt_reduced},
\[
\nabla \phi(z^*) + \hat{c} + J^\top y^* = 0
\quad\Rightarrow\quad
\nabla \phi(z^*) + c = -(\hat{c}-c) - J^\top y^*.
\]
Substituting gives
\[
\nabla_{\hat{c}} \mathcal{R}(\hat{c}; c)
=
P_H(\hat{c}-c) + P_H J^\top y^*.
\]

\emph{Step 4: Remove normal component.}
By \Cref{prop:proj_props}, $P_HJ^\top=0$, so the second term vanishes and
\[
\nabla_{\hat{c}} \mathcal{R}(\hat{c}; c) = P_H(\hat{c}-c)=P_He.
\]
\end{proof}

%==============================================================================
% \subsection{Proof of \Cref{cor:normal_filter_invariance}} 
% \label{app:proof_corollary}

% \begin{proof}
% For \Cref{eq:normal_filter}, let $\delta_N\in\mathcal N=\mathrm{range}(J^\top)$, so $\delta_N=J^\top v$ for some $v$.
% Then $P_H\delta_N=P_HJ^\top v=0$ by \Cref{prop:proj_props}.

% For \Cref{eq:normal_invariance}, assume the active set remains unchanged when replacing $\hat c$ by $\hat c+\alpha\delta_N$.
% Then the same $J$ and $P_H$ apply, and by \Cref{thm:regret_gradient},
% \[
% \nabla_{\hat{c}}\mathcal{R}(\hat{c}+\alpha\delta_N;c)
% =
% P_H\big((\hat{c}+\alpha\delta_N)-c\big)
% =
% P_H(\hat{c}-c) + \alpha P_H\delta_N
% =
% P_H(\hat{c}-c),
% \]
% using \Cref{eq:normal_filter}.
% \end{proof}

\subsection{Proof of \Cref{cor:normal_filter_invariance}}
\label{app:proof_corollary}

\begin{proof}
We prove the two statements separately.

\paragraph{Proof of \Cref{eq:normal_filter}.}
Let $\delta_N \in \mathcal N = \mathrm{range}(J^\top)$.
Then there exists a vector $v$ such that $\delta_N = J^\top v$.
By the projection properties in \Cref{prop:proj_props}, we have
\[
P_H J^\top = 0,
\]
and therefore
\[
P_H \delta_N = P_H J^\top v = 0.
\]

\paragraph{Proof of \Cref{eq:normal_invariance}.}
Assume that the active set remains unchanged when replacing $\hat c$ by
$\hat c + \alpha \delta_N$ for any scalar $\alpha$.
Under this assumption, the same Jacobian $J$ and projection operator $P_H$
apply.
By \Cref{thm:regret_gradient}, the regret gradient is given by
\[
\nabla_{\hat{c}}\mathcal{R}(\hat{c};c) = P_H(\hat{c}-c).
\]
Hence,
\[
\nabla_{\hat{c}}\mathcal{R}(\hat{c}+\alpha\delta_N;c)
=
P_H\big((\hat{c}+\alpha\delta_N)-c\big)
=
P_H(\hat{c}-c) + \alpha P_H\delta_N.
\]
Using \Cref{eq:normal_filter}, the second term vanishes, which yields
\[
\nabla_{\hat{c}}\mathcal{R}(\hat{c}+\alpha\delta_N;c)
=
P_H(\hat{c}-c).
\]
This proves the invariance of the regret gradient under normal perturbations.
\end{proof}

%==============================================================================
% B.4. section 4.2 backward proof 
\subsection{Proof of Section~\ref{subsec:backward}}
\label{app:vjp_derivation}

\begin{proof}
We derive the reduced system by explicitly solving the linearized KKT system defined in Equation~\eqref{eq:kkt_perturb}:
\begin{equation}
\begin{bmatrix}
H & J^\top \\
J & 0
\end{bmatrix}
\begin{bmatrix}
dz \\
dy
\end{bmatrix}
=
\begin{bmatrix}
e \\
0
\end{bmatrix}.
\end{equation}
Writing out the block equations gives:
\begin{align}
H dz + J^\top dy &= e, \label{eq:proof_line1} \\
J dz &= 0. \label{eq:proof_line2}
\end{align}
Since $H$ is positive definite (Assumption~\ref{assum:convex}), it is invertible. Solving \eqref{eq:proof_line1} for the primal perturbation $dz$ yields:
\begin{equation}
\label{eq:proof_dz}
dz = H^{-1}(e - J^\top dy) = H^{-1}e - H^{-1}J^\top dy.
\end{equation}
To eliminate $dz$, we substitute \eqref{eq:proof_dz} into the constraint equation \eqref{eq:proof_line2}:
\begin{equation}
J \left( H^{-1}e - H^{-1}J^\top dy \right) = 0.
\end{equation}
Rearranging terms to isolate the dual variable $dy$ leads to the Schur complement system:
\begin{equation}
(J H^{-1} J^\top) dy = J H^{-1} e.
\end{equation}
Letting $v = dy$ denote the dual perturbation, and defining the intermediate variables $x = H^{-1}e$ and $r = Jx$, this system becomes
\begin{equation}
(J H^{-1} J^\top) v = r,
\end{equation}
which matches the reduced Schur complement system in Equation~\eqref{eq:schur_solve}.

Finally, substituting $v$ back into \eqref{eq:proof_dz} yields $dz = H^{-1}e - H^{-1}J^\top v$. Since the solution of the linearized KKT system satisfies $dz = P_H e$, we identify the projected gradient as $g = P_H e = dz$, and hence
\begin{equation}
g = x - H^{-1}J^\top v,
\end{equation}
which is Equation~\eqref{eq:recover_ph_e}. Thus, solving the reduced Schur complement system suffices to compute $P_H e$ without explicitly forming $P_H$.
\end{proof}

%%%%%%%%%%%%%%%%%%%%%%%%%%%%%%%%%%%%%%%%%%%%%%%%%%%%%%%%%%%%%%%%%%%%%%%%%%%%%%%%%%%%%%%%
\section{Optimization Problem Details}
\label{app:benchmark_details}

This appendix provides full specifications and data generation procedures for the benchmark problems and real-world portfolio optimization problem used in our experiments.

%------------------------------------------------------------------------------
\subsection{Shortest Path}
\label{app:shortest_path}
% We consider a directed $5 \times 5$ grid graph $G = (V, E)$ where each node $(i, j) \in V$ is connected to its right and downward neighbors, forming a directed acyclic graph. The edge set is defined as:
% \begin{equation}
% E = \{((i, j), (i+1, j)) \mid i < 4\} \cup \{((i, j), (i, j+1)) \mid j < 4\},
% \end{equation}
% yielding $|E| = 40$ edges and $|V| = 25$ nodes.
% Let $z_{ij} = 1$ if edge $(i, j)$ is included in the path, and $0$ otherwise. The shortest path problem is formulated as:
% \begin{equation}
% \begin{aligned}
% \min_{z} \quad & \sum_{(i,j) \in E} c_{ij} z_{ij} \\
% \text{subject to} \quad & \sum_{j: (0,j) \in E} z_{0j} = 1 \\
% & \sum_{i: (i,4) \in E} z_{i4} = 1 \\
% & \sum_{j: (k,j) \in E} z_{kj} - \sum_{i: (i,k) \in E} z_{ik} = 0, \quad \forall k \in V \setminus \{(0,0), (4,4)\}
% \end{aligned}
% \end{equation}
% The first constraint ensures one unit of flow leaves the origin $(0,0)$. The second constraint ensures one unit of flow arrives at the destination $(4,4)$. The flow conservation constraints maintain balance at each intermediate node.

We consider the standard shortest-path benchmark used in prior work~\citep{elmachtoub2022smart, tang2022pyepo}. The problem is defined on
a directed $5 \times 5$ grid graph $G = (V, E)$, where each node $(i, j) \in V$ is
connected to its right and downward neighbors. The edge set is
\begin{equation*}
E = \{\, ((i,j),(i+1,j)) \mid 0 \le i < 4,\ 0 \le j \le 4 \,\} \cup \{\, ((i,j),(i,j+1)) \mid 0 \le i \le 4,\ 0 \le j < 4 \,\},
\end{equation*}
yielding $|E| = 40$ edges and $|V| = 25$ nodes, with source $s = (0,0)$ (top-left)
and target $t = (4,4)$ (bottom-right). Let $c_{uv}$ be the cost of edge $(u,v)$ and
$w_{uv}$ the decision variable indicating whether edge $(u,v)$ lies on the path, and
let $b \in \mathbb{R}^{|V|}$ be the demand vector with $b_s = -1$, $b_t = 1$, and
$b_v = 0$ for every other node. The shortest path problem is the integer program
whose linear relaxation is
\begin{equation}
\begin{aligned}
\min_{w} \quad & \sum_{(u,v) \in E} c_{uv}\, w_{uv} \\
\text{subject to} \quad & \sum_{u\,:\,(u,v) \in E} w_{uv} - \sum_{u\,:\,(v,u) \in E} w_{vu} = b_v, \quad \forall v \in V \\
& 0 \le w_{uv} \le 1, \quad \forall (u,v) \in E
\end{aligned}
\end{equation}
The first sum runs over edges entering $v$ and the second over edges leaving $v$,
so the flow-balance constraints route one unit of flow from the source $s$ to the
target $t$.

%------------------------------------------------------------------------------
\subsection{Knapsack}
\label{app:knapsack}
We consider a binary knapsack problem with $n = 100$ items. Let $z_i = 1$ if item $i$ is selected, and $0$ otherwise. The problem is formulated as:
\begin{equation}
\begin{aligned}
\max_{z} \quad & \sum_{i=1}^{n} c_i z_i \\
\text{subject to} \quad & \sum_{i=1}^{n} w_i z_i \leq C \\
& z_i \in \{0, 1\}, \quad \forall i \in \{1, \ldots, n\}
\end{aligned}
\end{equation}
The objective maximizes the total value of selected items. The constraint ensures that the total weight does not exceed capacity $C$.

\paragraph{LP Relaxation.}
Since the original problem is an integer program, methods requiring continuous relaxation (PEAR and LAVA) solve the LP relaxation during training:
\begin{equation}
\begin{aligned}
\max_{z} \quad & c^\top z \\
\text{subject to} \quad & w^\top z \leq C \\
& 0 \leq z_i \leq 1, \quad \forall i \in \{1, \ldots, n\}
\end{aligned}
\end{equation}
All methods are evaluated on the original integer problem.

\subsection{Mean-Variance Portfolio Optimization}
\label{app:mvo}

We study portfolio optimization using historical daily returns from S\&P 100 constituents over 2010--2024.

\paragraph{Problem Formulation.}
Let $w \in \mathbb{R}^{n}$ represent portfolio weights. The mean-variance optimization problem is:
\begin{equation}
\begin{aligned}
\min_{w} \quad & \frac{\lambda}{2}\ w^\top \Sigma w - \mu^\top w \\
\text{subject to} \quad & \mathbf{1}^\top w = 1, \quad w \geq 0
\end{aligned}
\end{equation}
where $\mu \in \mathbb{R}^{n}$ is the expected return vector, $\Sigma \in \mathbb{R}^{n \times n}$ is the covariance matrix, and $\lambda = 2.0$ is the risk aversion parameter. The constraints enforce a fully-invested long-only portfolio.

\paragraph{Data Construction.}
We use daily adjusted close prices to compute returns. The prediction model takes a 63-day lookback window of historical returns as input and forecasts 21-day ahead returns. Expected returns $\mu$ are computed as the mean of predicted daily returns. The covariance matrix $\Sigma$ is estimated from the combined 84-day window (63-day history + 21-day prediction) with spectral shifting to ensure positive definiteness.

%%%%%%%%%%%%%%%%%%%%%%%%%%%%%%%%%%%%%%%%%%%%%%%%%%%%%%%%%%%%%%%%%%%%%%%%%%%%%%%%%%%%%%%%
\section{Optimization Problems for Constraint Shift Experiments}
\label{app:shift}

This appendix details the modified optimization problems used in the constraint shift experiments. These experiments evaluate how well DFL methods generalize when the feasible region changes between training and test time, while the cost structure remains the same.

%------------------------------------------------------------------------------
\subsection{Shortest Path: Direction Generalization}
\label{app:shift_direction}
We additionally evaluate generalization across routing directions. Each
configuration shares the $5 \times 5$ grid node set and the cost-generation
procedure of Appendix~\ref{app:shortest_path}, but uses its own directed edge set
and origin--destination pair. The optimization has the same form as the linear
program in Appendix~\ref{app:shortest_path}, solved on the configuration-specific
directed graph; the base problem is the forward configuration below.

\paragraph{Experimental Setup.}
\begin{itemize}
    \item \textbf{Forward (Training):} source $(0,0)$, target $(4,4)$, with edges directed rightward and downward.
    \item \textbf{Cross (Test):} source $(4,0)$, target $(0,4)$, with edges directed rightward and upward.
\end{itemize}
Only the directed edge set and the origin--destination pair differ between the two
configurations; the grid nodes and the cost-generation procedure are identical. This
tests whether learned edge-cost predictions transfer to a different routing direction
on the same grid.
%------------------------------------------------------------------------------

\subsection{Knapsack: Capacity Generalization}
\label{app:shift_capacity}

We modify the knapsack problem from Appendix~\ref{app:knapsack} by varying the capacity constraint. The parameterized problem is:
\begin{equation}
\begin{aligned}
\max_{z} \quad & \sum_{i=1}^{n} c_i z_i \\
\text{subject to} \quad & \sum_{i=1}^{n} w_i z_i \leq C(\rho) \\
& z_i \in \{0, 1\}, \quad \forall i \in \{1, \ldots, n\}
\end{aligned}
\end{equation}
where the capacity is defined as a fraction of total weight:
\begin{equation}
C(\rho) = \rho \cdot \sum_{i=1}^{n} w_i.
\end{equation}

\paragraph{Experimental Setup.}
Models are trained with $\rho_{\text{train}} = 0.5$ and evaluated on capacities $\rho_{\text{test}} \in \{0.3, 0.5, 0.7, 0.9\}$. Lower capacity ratios create tighter constraints (fewer items can fit), while higher ratios relax the constraint (more items can fit). This tests whether methods learn cost predictions that generalize across different constraint tightness levels.

%------------------------------------------------------------------------------
\subsection{Portfolio Optimization: Short Selling Generalization}
\label{app:shift_short}

We modify the mean-variance optimization problem from Appendix~\ref{app:mvo} by relaxing the long-only constraint to allow short selling. The parameterized problem is:
\begin{equation}
\begin{aligned}
\min_{z} \quad & \frac{\lambda}{2}\, z^\top \Sigma z - \mu^\top z \\
\text{subject to} \quad & \mathbf{1}^\top z = 1 \\
& z \geq \ell
\end{aligned}
\end{equation}
where $\ell \in \mathbb{R}$ is the lower bound on individual asset weights.

\paragraph{Experimental Setup.}
Models are trained with $\ell_{\text{train}} = 0$ (long-only portfolio) and evaluated on $\ell_{\text{test}} \in \{-0.1, -0.3, -0.5, -1.0\}$. Negative lower bounds allow short positions up to the specified limit. For example, $\ell = -0.1$ permits shorting up to 10\% of portfolio value in any single asset. This tests whether return predictions learned under long-only constraints remain useful when the optimization gains additional flexibility through short selling.

%------------------------------------------------------------------------------
\subsection{Summary of Constraint Shifts}
\label{app:shift_summary}

Table~\ref{tab:constraint_shift_summary} summarizes the constraint modifications across all three problem domains.

\begin{table}[h]
\centering
\caption{Summary of constraint shift experiments. All experiments use the same cost/return prediction model trained under the training constraint, evaluated across multiple test constraints.}
\label{tab:constraint_shift_summary}
\begin{tabular}{llll}
\toprule
\textbf{Problem} & \textbf{Modified Constraint} & \textbf{Training} & \textbf{Test} \\
\midrule
Knapsack & Capacity ratio $\rho$ & $0.5$ & $\{0.3, 0.5, 0.7, 0.9\}$ \\
Shortest Path & Origin-destination pair $(v_s, v_t)$ & $(0,0) \to (4,4)$ & $(0,4) \to (4,0)$ \\
Portfolio & Weight lower bound $\ell$ & $0$ & $\{-0.1, -0.3, -0.5, -1.0\}$ \\
\bottomrule
\end{tabular}
\end{table}

\section{Experimental Details}
\label{app:exp_details}

Our code is available at \url{https://github.com/FinJun/PEAR}. All experiments are conducted on a single NVIDIA RTX A5000 GPU with an Intel Xeon Silver 4210R CPU, and repeated over 5 random seeds $\{0, 1, 2, 3, 4\}$ with mean and standard deviation reported.

\subsection{LP benchmarks Experiments (Shortest Path, Knapsack)}
\label{app:exp_lp}

Following~\citet{elmachtoub2022smart}, we generate 1,000 training, 500 validation, and 500 test samples, varying the polynomial degree $d \in \{2, 4, 6, 8\}$ to control problem difficulty. We use a single-layer linear network mapping 5-dimensional features to edge costs (40 dimensions for Shortest Path) or item values (100 dimensions for Knapsack). The parameter $\beta$ for normal injection in PEAR was fixed at $0.1$ throughout all experiments.

\subsection{Real-world QP Experiments (Portfolio Optimization)}
\label{app:exp_qp}

We use a chronological 70:10:20 split for train, validation, and test sets. The prediction model is DLinear~\citep{zeng2023transformers} with RevIN~\cite{kim2021reversible} normalization, using a lookback window of $L = 63$ days and prediction horizon of $H = 21$ days. All methods are trained with the Adam optimizer using learning rate $10^{-3}$ and batch size 64, with learning rate reduction (factor 0.5, patience 3) and early stopping (patience 10) based on validation regret.

For evaluation, portfolios are rebalanced every 21 trading days with a transaction cost of 0.1\% applied to turnover. We report cumulative return $\prod_{t=1}^{T}(1 + r_t) - 1$, annualized return and volatility (scaled by $\sqrt{252}$), Sharpe ratio assuming zero risk-free rate, maximum drawdown, and average turnover at each rebalancing.

\section{Additional Results}
\label{app:add_results}

We evaluate generalization under constraint shifts, where test-time constraints differ from training. This setting is practically important since real-world deployment often involves changing requirements.

\subsection{Capacity Shifts (Knapsack)}

We train models with capacity ratio 0.5 and evaluate on ratios $\{0.3, 0.5, 0.7, 0.9\}$. As shown in Table~\ref{tab:knapsack_capacity}, PEAR consistently achieves the best performance across all polynomial degrees and capacity ratios. Surrogate-based methods (SPO+, PFYL) degrade significantly at higher capacities (0.9), while PEAR maintains robust performance. DBB exhibits particularly poor generalization at low polynomial degrees.

\begin{table*}[h]
\centering
\caption{\textbf{Capacity generalization (Knapsack).} Normalized regret (\%, $\downarrow$) under varying capacity ratios. Best in \textbf{bold}, second best \underline{underlined}. *Training capacity.}
\label{tab:knapsack_capacity}
\footnotesize
\setlength{\tabcolsep}{4pt}
\renewcommand{\arraystretch}{1.05}
\begin{tabular}{@{}l cccc cccc@{}}
\toprule
\multirow{2}{*}[-0.5ex]{\textbf{Method}} &
\multicolumn{4}{c}{\textbf{Degree 2}} &
\multicolumn{4}{c}{\textbf{Degree 4}} \\
\cmidrule(lr){2-5}\cmidrule(lr){6-9}
& 0.3 & 0.5* & 0.7 & 0.9 & 0.3 & 0.5* & 0.7 & 0.9 \\
\midrule
MSE  & \mstd{0.449}{0.011} & \mstd{0.350}{0.017} & \second{\mstd{0.255}{0.014}} & \second{\mstd{0.141}{0.014}} & \mstd{1.269}{0.082} & \mstd{0.922}{0.059} & \mstd{1.037}{0.123} & \second{\mstd{1.557}{0.219}} \\
SPO+ & \best{\mstd{0.373}{0.006}} & \second{\mstd{0.291}{0.013}} & \mstd{0.260}{0.012} & \mstd{0.444}{0.056} & \best{\mstd{0.542}{0.034}} & \second{\mstd{0.381}{0.032}} & \mstd{0.834}{0.087} & \mstd{2.520}{0.216} \\
PFYL & \mstd{0.610}{0.024} & \mstd{0.506}{0.022} & \mstd{0.937}{0.103} & \mstd{5.109}{0.428} & \mstd{0.628}{0.060} & \mstd{0.441}{0.038} & \mstd{0.922}{0.145} & \mstd{3.792}{0.620} \\
DBB  & \mstd{2.514}{0.264} & \mstd{1.852}{0.250} & \mstd{2.093}{0.217} & \mstd{6.155}{0.629} & \mstd{1.003}{0.118} & \mstd{0.910}{0.143} & \mstd{1.907}{0.213} & \mstd{5.542}{0.342} \\
LAVA & \mstd{0.636}{0.051} & \mstd{0.439}{0.039} & \mstd{0.537}{0.021} & \mstd{2.191}{0.186} & \mstd{1.346}{0.074} & \mstd{0.477}{0.018} & \second{\mstd{0.644}{0.053}} & \mstd{3.337}{0.770} \\
PEAR & \second{\mstd{0.420}{0.041}} & \best{\mstd{0.287}{0.006}} & \best{\mstd{0.222}{0.010}} & \best{\mstd{0.140}{0.029}} & \second{\mstd{0.589}{0.057}} & \best{\mstd{0.315}{0.017}} & \best{\mstd{0.355}{0.029}} & \best{\mstd{0.590}{0.254}} \\
\midrule
& \multicolumn{4}{c}{\textbf{Degree 6}} & \multicolumn{4}{c}{\textbf{Degree 8}} \\
\cmidrule(lr){2-5}\cmidrule(lr){6-9}
& 0.3 & 0.5* & 0.7 & 0.9 & 0.3 & 0.5* & 0.7 & 0.9 \\
\midrule
MSE  & \mstd{2.629}{0.303} & \mstd{1.717}{0.202} & \mstd{1.946}{0.164} & \second{\mstd{2.909}{0.171}} & \mstd{4.252}{0.394} & \mstd{2.285}{0.183} & \mstd{2.036}{0.182} & \second{\mstd{2.732}{0.204}} \\
SPO+ & \second{\mstd{1.187}{0.066}} & \second{\mstd{0.608}{0.059}} & \mstd{1.225}{0.109} & \mstd{3.416}{0.233} & \second{\mstd{1.975}{0.123}} & \second{\mstd{0.763}{0.067}} & \second{\mstd{1.194}{0.117}} & \mstd{3.067}{0.243} \\
PFYL & \mstd{1.528}{0.082} & \mstd{0.723}{0.044} & \mstd{1.165}{0.254} & \mstd{3.783}{1.056} & \mstd{2.658}{0.331} & \mstd{1.056}{0.132} & \mstd{1.255}{0.113} & \mstd{3.308}{0.351} \\
DBB  & \mstd{1.437}{0.298} & \mstd{1.040}{0.275} & \mstd{2.026}{0.330} & \mstd{4.864}{0.472} & \mstd{2.156}{0.220} & \mstd{1.081}{0.114} & \mstd{1.849}{0.117} & \mstd{4.256}{0.218} \\
LAVA & \mstd{3.742}{0.259} & \mstd{0.820}{0.053} & \second{\mstd{0.996}{0.228}} & \mstd{4.721}{1.124} & \mstd{7.196}{0.758} & \mstd{1.351}{0.200} & \mstd{1.541}{0.498} & \mstd{5.397}{1.086} \\
PEAR & \best{\mstd{0.986}{0.088}} & \best{\mstd{0.388}{0.029}} & \best{\mstd{0.478}{0.036}} & \best{\mstd{1.043}{0.224}} & \best{\mstd{1.628}{0.233}} & \best{\mstd{0.437}{0.038}} & \best{\mstd{0.554}{0.058}} & \best{\mstd{1.232}{0.197}} \\
\bottomrule
\end{tabular}
\end{table*}

\subsection{Direction Shifts (Shortest Path)}

We train models on one source-target pair and evaluate on a different direction. Table~\ref{tab:sp_direction} reveals an interesting pattern: MSE achieves the best performance across all degrees, with PEAR as the second best. This suggests that task-specific DFL objectives can overfit to training constraint structures, whereas prediction-focused training (MSE) and geometry-aware methods (PEAR) generalize better to unseen constraint configurations.

\begin{table}[h]
\centering
\caption{\textbf{Direction generalization (Shortest Path).} Normalized regret (\%, $\downarrow$) under constraint  shift. Best in \textbf{bold}, second best \underline{underlined}.}
\label{tab:sp_direction}
\footnotesize
\setlength{\tabcolsep}{5pt}
\renewcommand{\arraystretch}{1.05}
\begin{tabular}{@{}l cccc@{}}
\toprule
\textbf{Method} & \textbf{Degree 2} & \textbf{Degree 4} & \textbf{Degree 6} & \textbf{Degree 8} \\
\midrule
MSE  & \best{\mstd{0.14}{0.03}} & \best{\mstd{1.95}{0.44}} & \best{\mstd{6.53}{1.30}} & \best{\mstd{14.00}{1.47}} \\
SPO+ & \mstd{9.38}{4.40} & \mstd{22.14}{8.43} & \mstd{29.03}{6.71} & \mstd{43.40}{9.83} \\
PFYL & \mstd{9.38}{2.37} & \mstd{21.05}{6.26} & \mstd{23.35}{6.15} & \mstd{32.33}{10.89} \\
DBB  & \mstd{3.59}{0.72} & \mstd{8.95}{6.08} & \mstd{22.55}{8.69} & \mstd{31.06}{5.78} \\
LAVA & \mstd{16.84}{3.12} & \mstd{45.01}{8.40} & \mstd{78.73}{13.33} & \mstd{123.40}{19.35} \\
PEAR & \second{\mstd{0.32}{0.37}} & \second{\mstd{3.63}{2.08}} & \second{\mstd{7.07}{1.90}} & \second{\mstd{21.42}{9.02}} \\
\bottomrule
\end{tabular}
\end{table}

\subsection{Lower Bound Shifts (Portfolio Optimization)}

We train models with long-only constraints ($\text{lb}=0$) and evaluate under relaxed lower bounds allowing short-selling. Table~\ref{tab:mvo_lb} shows that MSE achieves the best generalization, with PEAR as the closest DFL method. Differentiable optimization methods (QPTH, CvxpyLayers) show degraded performance as constraints deviate further from training conditions.

\begin{table}[h]
\centering
\caption{\textbf{Lower bound generalization (MVO).} Normalized regret (\%, $\downarrow$) under varying lower bound constraints. Best in \textbf{bold}, second best \underline{underlined}.}
\label{tab:mvo_lb}
\small
\setlength{\tabcolsep}{4pt}
\begin{tabular}{@{}l cccc@{}}
\toprule
\textbf{Method} & \textbf{lb=-0.1} & \textbf{lb=-0.3} & \textbf{lb=-0.5} & \textbf{lb=-1.0} \\
\midrule
MSE         & \best{\mstd{119.9}{0.2}} & \best{\mstd{126.4}{0.1}} & \best{\mstd{130.6}{0.1}} & \best{\mstd{138.3}{0.1}} \\
QPTH        & \mstd{129.3}{2.4} & \mstd{145.5}{4.0} & \mstd{152.4}{5.7} & \mstd{162.4}{6.6} \\
CvxpyLayers & \mstd{129.8}{2.5} & \mstd{149.0}{5.8} & \mstd{156.9}{6.8} & \mstd{167.6}{7.1} \\
PEAR        & \second{\mstd{124.1}{3.4}} & \second{\mstd{136.5}{2.5}} & \second{\mstd{142.6}{1.7}} & \second{\mstd{153.4}{0.9}} \\
\bottomrule
\end{tabular}
\end{table}

\subsection{Portfolio Performance Analysis}
\label{app:ppa}

Beyond regret minimization (Table~\ref{tab:qp_mvo_table}), we evaluate whether PEAR's gradient estimation translates to superior real-world portfolio performance. In mean-variance optimization, balancing returns against risk is crucial---high returns alone are insufficient if the portfolio exhibits excessive volatility during market turbulence. Figure~\ref{fig:portfolio_cumulative} shows cumulative portfolio value over the out-of-sample period, where PEAR achieves the highest terminal value, demonstrating that improved training regret translates to superior realized returns. More importantly, PEAR's advantage extends beyond raw returns. Figure~\ref{fig:portfolio_drawdown} examines portfolio behavior during the two most severe drawdown periods (days 80--180 and 560--640). In both cases, PEAR exhibits the most stable trajectory, demonstrating superior robustness to market downturns compared to differentiable optimization methods. This suggests that PEAR learns predictions that better capture the risk-return trade-off encoded in the mean-variance objective.

\begin{figure}[h]
\centering
\includegraphics[width=0.5\linewidth]{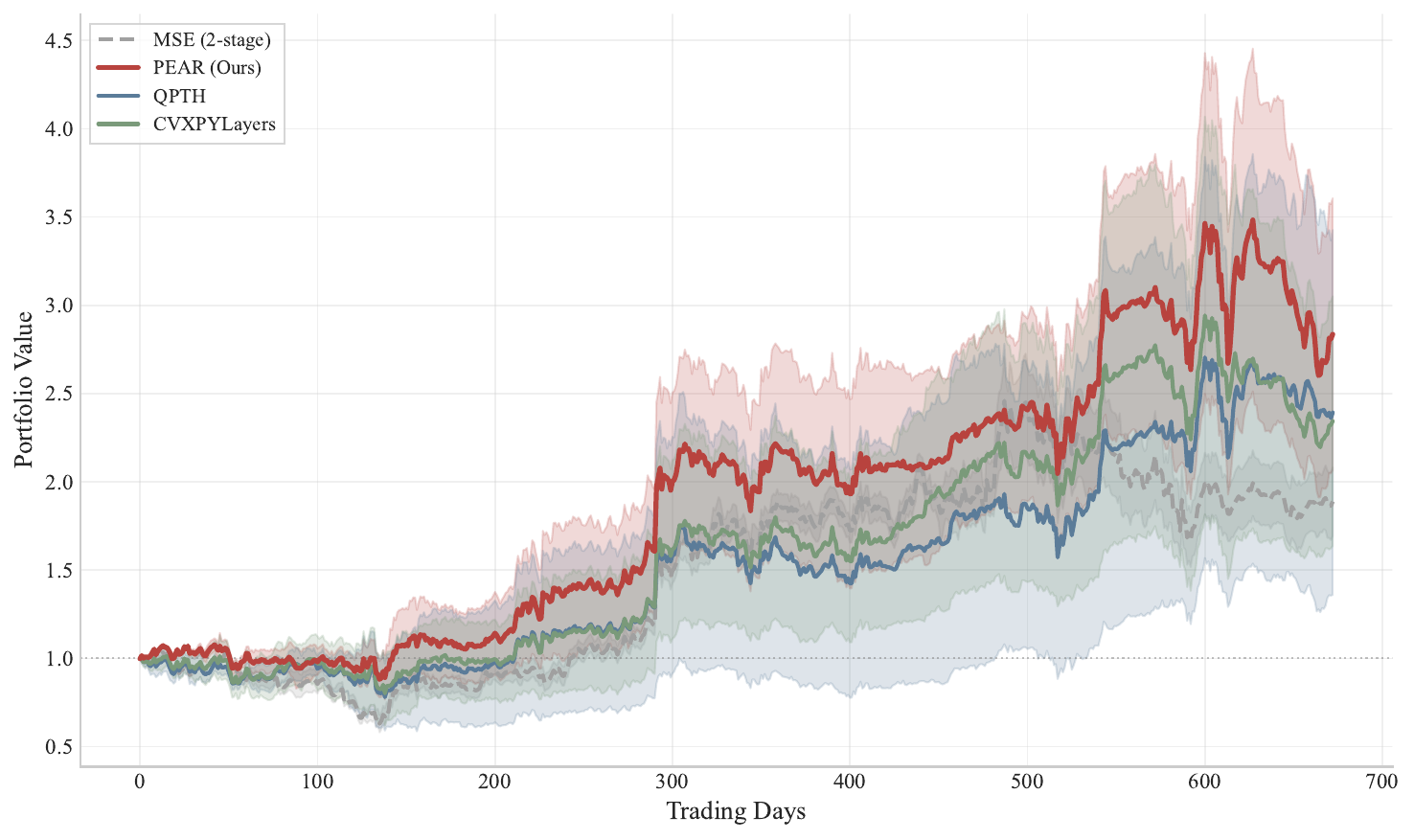}
\caption{\textbf{Portfolio cumulative returns.} Portfolio value over the test period. PEAR achieves the highest value among all methods.}
\label{fig:portfolio_cumulative}
\end{figure}

\begin{figure}[h]
\centering
\begin{subfigure}[b]{0.4\linewidth}
\includegraphics[width=\linewidth]{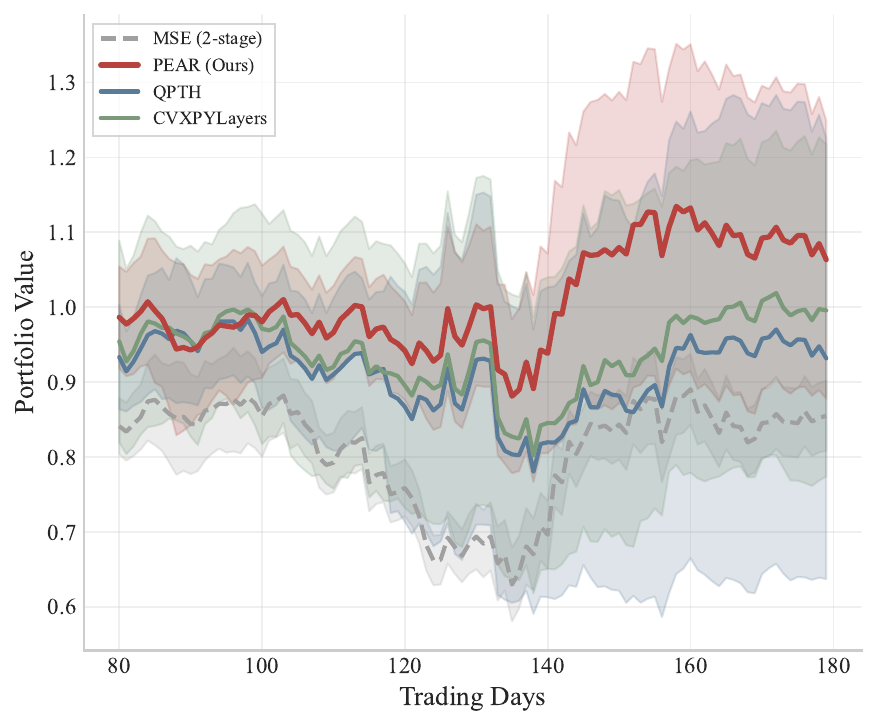}
\caption{Drawdown Period I (days 80--180)}
\end{subfigure}\hspace{0.03\linewidth}
\begin{subfigure}[b]{0.4\linewidth}
\includegraphics[width=\linewidth]{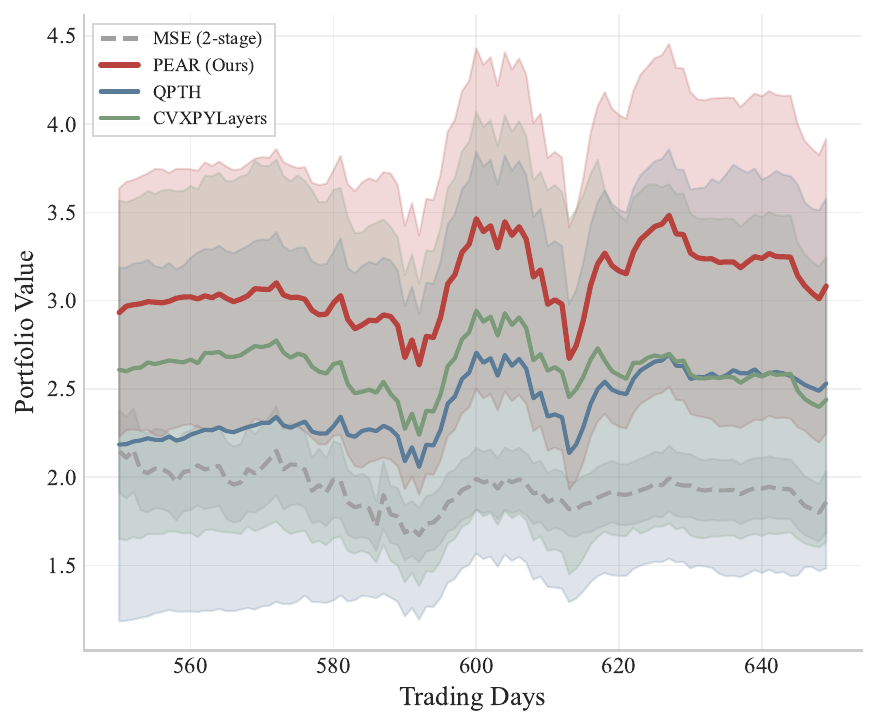}
\caption{Drawdown Period II (days 560--640)}
\end{subfigure}
\caption{\textbf{Drawdown analysis.} Portfolio value during the two most severe market downturns. PEAR demonstrates the most stable behavior, indicating robust risk management.}
\label{fig:portfolio_drawdown}
\end{figure}

%%%%%%%%%%%%%%%%%%%%%%%%%%%%%%%%%%%%%%%%%%%%%%%%%%%%%%%%%%%%%%%%%%%%%%%%%%%%%%%%%%%%%%%%
\section{Additional Experiments}
\label{app:additional_exp}

This appendix presents additional empirical results that complement those in the main paper.
Appendix~\ref{app:ablation} analyzes sensitivity to PEAR's two LP-specific hyperparameters, and Appendix~\ref{app:real_ks} evaluates PEAR on a real-world knapsack.

\subsection{Hyperparameter Ablations on LP Benchmarks}
\label{app:ablation}

For LP problems, PEAR introduces two hyperparameters, the quadratic smoothing strength $\lambda$ and the normal injection coefficient $\beta$.
We isolate the sensitivity of PEAR to each on the two LP benchmarks.
All other settings follow Appendix~\ref{app:exp_lp}, and we report normalized regret over 5 seeds.
The default used in the main experiments is $(\lambda, \beta) = (0.1, 0.1)$.

\paragraph{Normal Injection.}
\label{app:ablation_beta}
We sweep $\beta \in \{0, 0.05, 0.1, 0.2, 0.5\}$ with $\lambda=0.1$ fixed.
Table~\ref{tab:ablation_beta} reports the normalized regret.
On Shortest Path the choice of $\beta$ has only a mild effect. $\beta=0$ is already competitive and attains the best regret at the hardest degree (Deg 8), while small positive $\beta$ is marginally better at the intermediate degrees.
On Knapsack the effect is more pronounced---a small amount of normal information consistently helps, with $\beta=0.05$ best at the higher degrees (Deg 6 and 8) and $\beta=0.1$ best at Deg 4, whereas the pure gradient is the weakest of the small-$\beta$ settings.
Large $\beta$ ($0.5$) degrades performance on Knapsack, worsening monotonically with the polynomial degree.
PEAR is otherwise stable for $\beta\in[0.05,\,0.2]$, and we use $\beta=0.1$ as the default.

\begin{table}[h]
\centering
\caption{\textbf{Normal-injection ablation.} Normalized regret (\%, $\downarrow$) under varying $\beta$, with $\lambda=0.1$ fixed. Best $\beta$ per degree in \textbf{bold}, second best \underline{underlined}.}
\label{tab:ablation_beta}
\small
\setlength{\tabcolsep}{5pt}
\renewcommand{\arraystretch}{1.05}
\begin{tabular}{@{}cl cccc@{}}
\toprule
\textbf{Task} & $\beta$ & \textbf{Deg 2} & \textbf{Deg 4} & \textbf{Deg 6} & \textbf{Deg 8} \\
\midrule
\multirow{5}{*}{\shortstack[c]{Shortest\\Path}}
 & 0     & \mstd{0.111}{0.030} & \mstd{0.777}{0.216} & \mstd{2.314}{0.997} & \best{\mstd{4.215}{1.294}} \\
 & 0.05  & \mstd{0.119}{0.014} & \best{\mstd{0.751}{0.226}} & \mstd{2.257}{1.019} & \mstd{5.763}{3.971} \\
 & 0.1   & \second{\mstd{0.104}{0.034}} & \second{\mstd{0.774}{0.239}} & \best{\mstd{2.138}{0.681}} & \second{\mstd{4.246}{0.989}} \\
 & 0.2   & \mstd{0.112}{0.035} & \mstd{0.801}{0.249} & \second{\mstd{2.162}{0.726}} & \mstd{4.574}{1.915} \\
 & 0.5   & \best{\mstd{0.096}{0.033}} & \mstd{0.811}{0.239} & \mstd{2.277}{0.963} & \mstd{4.329}{1.633} \\
\midrule
\multirow{5}{*}{Knapsack}
 & 0     & \mstd{0.299}{0.014} & \mstd{0.346}{0.019} & \mstd{0.429}{0.039} & \mstd{0.477}{0.039} \\
 & 0.05  & \mstd{0.288}{0.019} & \second{\mstd{0.321}{0.018}} & \best{\mstd{0.378}{0.025}} & \best{\mstd{0.406}{0.042}} \\
 & 0.1   & \mstd{0.287}{0.006} & \best{\mstd{0.315}{0.017}} & \second{\mstd{0.388}{0.029}} & \second{\mstd{0.437}{0.038}} \\
 & 0.2   & \best{\mstd{0.283}{0.010}} & \mstd{0.332}{0.019} & \mstd{0.432}{0.026} & \mstd{0.495}{0.042} \\
 & 0.5   & \second{\mstd{0.286}{0.014}} & \mstd{0.366}{0.025} & \mstd{0.533}{0.040} & \mstd{0.653}{0.057} \\
\bottomrule
\end{tabular}
\end{table}

\paragraph{QP Smoothing.}
\label{app:ablation_lambda}
We sweep $\lambda \in \{0.01, 0.05, 0.1, 0.5, 1.0\}$ with $\beta=0.1$ fixed.
Small $\lambda$ keeps the forward solution close to the original LP but yields nearly singular projections; large $\lambda$ stabilizes the projection but biases the forward solution away from the LP optimum.
Table~\ref{tab:ablation_lambda} reports the normalized regret.
On Shortest Path, $\lambda=0.1$ is optimal at every degree, while $\lambda=0.5$ is preferred on Knapsack.
Across the grid, $\lambda\in[0.1,\,0.5]$ stays close to the best regret, and we use $\lambda=0.1$ as the default.

\begin{table}[h]
\centering
\caption{\textbf{QP-smoothing ablation.} Normalized regret (\%, $\downarrow$) under varying $\lambda$, with $\beta=0.1$ fixed. Best $\lambda$ per degree in \textbf{bold}, second best \underline{underlined}.}
\label{tab:ablation_lambda}
\small
\setlength{\tabcolsep}{5pt}
\renewcommand{\arraystretch}{1.05}
\begin{tabular}{@{}cl cccc@{}}
\toprule
\textbf{Task} & $\lambda$ & \textbf{Deg 2} & \textbf{Deg 4} & \textbf{Deg 6} & \textbf{Deg 8} \\
\midrule
\multirow{5}{*}{\shortstack[c]{Shortest\\Path}}
 & 0.01  & \mstd{0.471}{0.281} & \mstd{1.834}{0.803} & \mstd{4.554}{1.838} & \mstd{9.410}{3.792} \\
 & 0.05  & \mstd{0.152}{0.049} & \second{\mstd{0.898}{0.301}} & \second{\mstd{2.421}{0.859}} & \mstd{5.906}{2.529} \\
 & 0.1   & \best{\mstd{0.104}{0.034}} & \best{\mstd{0.774}{0.239}} & \best{\mstd{2.138}{0.681}} & \best{\mstd{4.246}{0.989}} \\
 & 0.5   & \second{\mstd{0.114}{0.044}} & \mstd{1.221}{0.325} & \mstd{3.078}{0.964} & \second{\mstd{4.952}{1.506}} \\
 & 1.0   & \mstd{0.128}{0.029} & \mstd{1.384}{0.329} & \mstd{3.992}{1.224} & \mstd{6.878}{2.462} \\
\midrule
\multirow{5}{*}{Knapsack}
 & 0.01  & \mstd{0.361}{0.007} & \mstd{0.386}{0.024} & \mstd{0.521}{0.021} & \mstd{0.709}{0.110} \\
 & 0.05  & \mstd{0.293}{0.011} & \mstd{0.329}{0.021} & \mstd{0.419}{0.025} & \mstd{0.473}{0.045} \\
 & 0.1   & \mstd{0.287}{0.006} & \second{\mstd{0.315}{0.017}} & \second{\mstd{0.388}{0.029}} & \mstd{0.437}{0.038} \\
 & 0.5   & \best{\mstd{0.268}{0.014}} & \best{\mstd{0.311}{0.019}} & \best{\mstd{0.370}{0.027}} & \best{\mstd{0.391}{0.029}} \\
 & 1.0   & \second{\mstd{0.269}{0.010}} & \mstd{0.322}{0.021} & \mstd{0.409}{0.029} & \second{\mstd{0.429}{0.033}} \\
\bottomrule
\end{tabular}
\end{table}

\subsection{Real-World Knapsack: Building Investment}
\label{app:real_ks}

Real-world knapsack benchmarks are uncommon in the DFL literature, where the standard practice is to use synthetic feature--cost mappings as in our LP benchmarks above.
For completeness, we additionally report results on the building investment dataset of \citet{mandi2020interior}, one of the few real-world knapsack instances used in prior DFL work, with $372$ properties labeled by construction cost and sales price.

\paragraph{Problem Formulation.}
Each test instance solves a binary knapsack with predicted sales prices as item values and known construction costs as weights.
\begin{equation}
\begin{aligned}
\max_{z} \quad & \sum_{i=1}^{n} \hat{c}_i z_i \\
\text{subject to} \quad & \sum_{i=1}^{n} w_i z_i \leq B \\
& z_i \in \{0, 1\}, \quad \forall i \in \{1, \ldots, n\}
\end{aligned}
\end{equation}
Here $\hat{c}_i$ is the predicted sales price of property $i$, $w_i$ is its construction cost, and $B$ is the budget.
We follow the setup of \citet{mandi2020interior} with two changes.
We use an inequality budget constraint instead of equality, which avoids infeasibility across batch sizes, and we use a batch size of $10$ instead of $31$.
The budget is set to $B = 0.4 \sum_i w_i$, all other settings follow Appendix~\ref{app:exp_lp}, and we report normalized regret over 5 seeds.

\begin{table}[h]
\centering
\caption{\textbf{Real-world Knapsack (building investment).} Normalized regret (\%, $\downarrow$)  over 5 seeds. Best in \textbf{bold}, second best \underline{underlined}.}
\label{tab:real_ks}
\footnotesize
\setlength{\tabcolsep}{5pt}
\renewcommand{\arraystretch}{1.10}
\begin{tabular}{@{}l cccccc@{}}
\toprule
\textbf{Method} & MSE & SPO+ & PFYL & DBB & LAVA & PEAR \\
\midrule
Regret (\%) & \mstd{1.22}{0.89} & \mstd{0.72}{0.57} & \second{\mstd{0.70}{0.80}} & \mstd{12.20}{10.05} & \mstd{2.12}{0.79} & \best{\mstd{0.51}{0.57}} \\
\bottomrule
\end{tabular}
\end{table}

Table~\ref{tab:real_ks} reports the normalized regret.
PEAR achieves the lowest regret, with PFYL second.
The two-stage MSE baseline trails the leading DFL methods, and DBB shows the largest regret with substantial variance.

\section{Additional Discussion}
\label{app:discussion}

\paragraph{Stability of Active-Set Identification.}
Our theoretical results rely on correctly identifying the active constraint set at the optimal solution, which in practice is inferred from a numerical solver and can be sensitive near constraint boundaries; thus constraints that are close to active can be misclassified \citep{oberlin2006active}.
When this happens, the reduced Jacobian $J$ used in the projection is perturbed, which can in turn perturb the projected gradient.

To quantify this effect, we measure how often the inferred active set changes under small perturbations.
Given an optimization problem with $m$ inequality constraints, we encode the activity pattern as a binary mask $a\in\{0,1\}^m$.
For a reference solution with mask $a$ and a perturbed solution with mask $a'$, we report the change rate in percent
\begin{equation}
\label{eq:active_change_rate}
\rho\;(\%) \;=\; \frac{100}{m}\sum_{i=1}^m \mathbb{I}\!\left[a_i \neq a'_i\right].
\end{equation}
We generate multiple perturbed costs around $\hat c$, recompute the corresponding masks, and average $\rho$ across perturbations.
In our LP benchmarks, the average change rates were $\rho=0.30\%$ for Shortest Path and $\rho=0.14\%$ for Knapsack.
These results validate the stability of active-set identification in our experiments. For problems with many nearly binding constraints, the gradient estimate may be less stable.

\begin{table*}[h]
\centering
\caption{\textbf{Prediction accuracy.}
MSE ($\downarrow$), over 5 seeds. Best in \textbf{bold} and second best \underline{underlined}.}
\label{tab:mse_comparison}
\begin{minipage}[c]{0.62\textwidth}
\centering
\small
(a) LP Benchmarks\\[0.5em]
\setlength{\tabcolsep}{5pt}
\renewcommand{\arraystretch}{1.10}
\begin{tabular}{@{}cl cccc@{}}
\toprule
\textbf{Task} & \textbf{Method} & \textbf{Deg 2} & \textbf{Deg 4} & \textbf{Deg 6} & \textbf{Deg 8} \\
\midrule
\multirow{6}{*}{\shortstack[c]{Shortest\\Path}}
& MSE  & \best{\mstd{0.00}{0.00}} & \best{\mstd{0.09}{0.01}} & \best{\mstd{0.68}{0.08}} & \best{\mstd{4.59}{1.08}} \\
& SPO+ & \mstd{0.31}{0.18} & \mstd{0.61}{0.23} & \mstd{1.55}{0.17} & \mstd{6.24}{1.21} \\
& PFYL & \mstd{1.78}{0.06} & \mstd{1.53}{0.07} & \mstd{1.86}{0.05} & \mstd{5.64}{1.17} \\
& DBB  & \mstd{0.10}{0.04} & \mstd{0.20}{0.09} & \mstd{1.14}{0.26} & \mstd{5.65}{1.02} \\
& LAVA & \mstd{1.16}{0.04} & \mstd{1.19}{0.05} & \mstd{2.37}{0.23} & \mstd{7.89}{1.21} \\
& PEAR & \second{\mstd{0.01}{0.01}} & \second{\mstd{0.16}{0.06}} & \second{\mstd{0.78}{0.09}} & \second{\mstd{5.23}{1.18}} \\
\midrule
\multirow{6}{*}{Knapsack}
& MSE  & \best{\mstd{0.18}{0.01}} & \best{\mstd{2.67}{0.26}} & \best{\mstd{22.58}{3.18}} & \best{\mstd{139.42}{21.60}} \\
& SPO+ & \mstd{4.59}{0.09} & \second{\mstd{5.82}{0.35}} & \second{\mstd{25.14}{3.10}} & \second{\mstd{141.33}{24.10}} \\
& PFYL & \mstd{8.97}{0.11} & \mstd{8.96}{1.52} & \mstd{36.16}{5.81} & \mstd{181.77}{27.89} \\
& DBB  & \mstd{12.07}{0.24} & \mstd{11.89}{0.63} & \mstd{29.59}{4.80} & \mstd{144.13}{24.16} \\
& LAVA & \mstd{10.97}{0.60} & \mstd{16.27}{1.66} & \mstd{50.65}{5.63} & \mstd{205.00}{31.15} \\
& PEAR & \second{\mstd{2.54}{2.53}} & \mstd{12.95}{2.01} & \mstd{43.34}{4.96} & \mstd{187.62}{29.71} \\
\bottomrule
\end{tabular}
\end{minipage}
\hfill
\begin{minipage}[c]{0.35\textwidth}
\centering
\small
(b) Real-world QP\\[0.5em]
\setlength{\tabcolsep}{6pt}
\renewcommand{\arraystretch}{1.15}
\begin{tabular}{@{}lc@{}}
\toprule
\textbf{Method} & \textbf{Normalized MSE} \\
\midrule
MSE         & $\mathbf{1.105 \pm 0.005}$ \\
PEAR        & $\underline{1.457 \pm 0.016}$ \\
QPTH        & $1.469 \pm 0.024$ \\
CvxpyLayers & $1.493 \pm 0.019$ \\
\bottomrule
\end{tabular}
\end{minipage}
\end{table*}

\paragraph{Prediction Accuracy and Decision Quality.}
Existing DFL methods focus on minimizing regret, often at the expense of prediction accuracy measured by MSE.
Our framework explains this phenomenon by decomposing prediction error into a decision-relevant tangent component and a decision-irrelevant normal component.
Conventional DFL methods implicitly discard the normal component, which can degrade prediction accuracy.

This relationship between regret minimization and prediction accuracy is highly problem-dependent.
Since PEAR prioritizes regret reduction, it adaptively allocates learning capacity based on the problem structure.
In Shortest Path, where accurate cost prediction of neighboring nodes is inherently linked to identifying optimal paths, PEAR achieves substantially lower MSE compared to other DFL baselines.
In contrast, for Knapsack where relative ranking of item values matters more than precise magnitude estimation, PEAR focuses more on decision quality improvement rather than prediction accuracy.

Our constraint shift experiments further support this observation.
In Shortest Path and Portfolio Optimization, where the MSE baseline achieved the best regret under constraint shift, PEAR also exhibited superior prediction accuracy compared to other DFL methods.
However, in Knapsack where the MSE baseline suffered significant regret degradation under constraint shifts, PEAR showed higher MSE than other DFL baselines.
These results suggest that PEAR implicitly learns to maintain prediction fidelity when it is beneficial for decision quality.

% \newpage
\paragraph{Limitations and Future Directions.}
While existing DFL methods have achieved strong decision quality, they have operated as black boxes with respect to why regret decreases and how the learning direction is determined.
Our work provides a structural characterization that makes these dynamics interpretable through the lens of tangent space projection.
Nevertheless, PEAR remains within the DFL paradigm and thus shares its fundamental limitation: an inherent bias toward decision quality at the potential expense of prediction accuracy.

Through our theoretical analysis, we established that the regret gradient lies within the MSE gradient as its projection onto the decision-relevant subspace.
The Table~\ref{tab:mse_comparison} results demonstrate that learning only the decision-relevant component can still achieve strong prediction accuracy when problem structure tightly couples prediction and decision quality.
Leveraging this decomposition into normal and decision-relevant components to overcome the trade-off between decision quality and prediction accuracy, long viewed as a structural limitation of DFL, remains a promising direction for future work.

\end{document}